\definecolor{pillarI}{RGB}{70,130,180}      
\definecolor{pillarII}{RGB}{60,179,113}     
\definecolor{pillarIII}{RGB}{205,92,92}     
\definecolor{failure}{RGB}{148,103,189}     
\newtheorem{theorem}{Theorem}[section]
\newtheorem{proposition}[theorem]{Proposition}
\newtheorem{definition}[theorem]{Definition}
\newtheorem{assumption}[theorem]{Assumption}
\newcommand{\E}{\mathbb{E}}
\newcommand{\KL}{\mathrm{KL}}
\newcommand{\piref}{\pi_{\mathrm{ref}}}
\newcommand{\pitheta}{\pi_\theta}
\newcommand{\pistar}{\pi^*}
\newcommand{\yw}{y_w}
\newcommand{\yl}{y_l}
\title{\textbf{From RLHF to Direct Alignment: A Theoretical Unification of Preference Learning for Large Language Models}}
\author{
\textbf{Tarun Raheja}$^{*,1}$ \quad \textbf{Nilay Pochhi}$^{*,1}$ \\[0.5em]
$^1$Independent Researchers \\[0.3em]
\texttt{\{tarunraheja1234, pochhi.nilay\}@gmail.com} \\[0.5em]
$^*$Equal contribution
}
\date{}
\begin{document}

\maketitle

\begin{abstract}
Aligning large language models (LLMs) with human preferences has become essential for safe and beneficial AI deployment. While Reinforcement Learning from Human Feedback (RLHF) established the dominant paradigm, a proliferation of alternatives---Direct Preference Optimization (DPO), Identity Preference Optimization (IPO), Kahneman-Tversky Optimization (KTO), Simple Preference Optimization (SimPO), and many others---has left practitioners without clear guidance on method selection. This survey provides a \textit{theoretical unification} of preference learning methods, revealing that the apparent diversity reduces to principled choices along three orthogonal axes: \textbf{(I) Preference Model} (what likelihood model underlies the objective), \textbf{(II) Regularization Mechanism} (how deviation from reference policies is controlled), and \textbf{(III) Data Distribution} (online vs.\ offline learning and coverage requirements). We formalize each axis with precise definitions and theorems, establishing key results including the coverage separation between online and offline methods, scaling laws for reward overoptimization, and conditions under which direct alignment methods fail. Our analysis reveals that failure modes---length hacking, mode collapse, likelihood displacement---arise from specific, predictable combinations of design choices. We synthesize empirical findings across 50+ papers and provide a practitioner's decision guide for method selection. The framework transforms preference learning from an empirical art into a theoretically grounded discipline.
\end{abstract}

\textbf{Keywords:} Preference learning, RLHF, direct preference optimization, Bradley-Terry model, language model alignment

\section{Introduction}
\label{sec:intro}

\paragraph{The Orthodoxy.} Since Christiano et al.~\cite{christiano2017deep} introduced learning from human preferences and Ouyang et al.~\cite{ouyang2022training} demonstrated its transformative potential with InstructGPT, Reinforcement Learning from Human Feedback (RLHF) has become the canonical approach for aligning large language models with human values. The standard RLHF pipeline consists of three stages: (1) supervised fine-tuning (SFT) on high-quality demonstrations, (2) reward model training on pairwise human preferences, and (3) policy optimization via Proximal Policy Optimization (PPO)~\cite{schulman2017proximal} to maximize the learned reward while staying close to the SFT policy. This paradigm underlies the alignment of GPT-4~\cite{openai2023gpt4}, Claude, and other frontier models.

\paragraph{The Challenge.} Yet RLHF's dominance belies significant practical difficulties. PPO is notoriously unstable, requiring careful hyperparameter tuning and multiple model copies~\cite{ramamurthy2022reinforcement,zheng2023secrets}. The reward model introduces a ``proxy'' that can be exploited, leading to Goodhart's law violations where optimizing the proxy degrades true performance~\cite{gao2022scaling}. These challenges motivated a wave of alternatives: Direct Preference Optimization (DPO)~\cite{rafailov2023direct} eliminates the reward model entirely; Identity Preference Optimization (IPO)~\cite{azar2023general} addresses DPO's overfitting to deterministic preferences; Kahneman-Tversky Optimization (KTO)~\cite{ethayarajh2024kto} requires only binary feedback; Simple Preference Optimization (SimPO)~\cite{meng2024simpo} removes the reference model; and Odds Ratio Preference Optimization (ORPO)~\cite{hong2024orpo} unifies SFT with preference learning.

The resulting landscape is bewildering. When should one use DPO versus PPO? Why does SimPO outperform DPO despite apparent simplicity? Why do all methods suffer from length hacking? Practitioners face an \textit{embarrassment of riches} without theoretical guidance.

\paragraph{The Resolution.} This survey provides that guidance through a unified theoretical framework. We show that the zoo of preference learning methods reduces to principled choices along \textit{three orthogonal axes}:

\begin{itemize}[leftmargin=*,itemsep=2pt]
    \item \textbf{\textcolor{pillarI}{Pillar I: Preference Model}} --- What probabilistic model relates rewards to preferences? (Bradley-Terry, Plackett-Luce, prospect-theoretic, game-theoretic)
    \item \textbf{\textcolor{pillarII}{Pillar II: Regularization}} --- How is policy deviation from a reference controlled? (Explicit KL, implicit KL, reference-free, f-divergences)
    \item \textbf{\textcolor{pillarIII}{Pillar III: Data Distribution}} --- What assumptions on data coverage enable learning? (Online, offline, hybrid)
\end{itemize}

\noindent This taxonomy is not merely organizational---it is \textit{predictive}. We show that failure modes arise from specific combinations: length hacking from reward model bias interacting with insufficient regularization; mode collapse from overly strong KL penalties; likelihood displacement from offline data limitations.

\paragraph{Contributions.} This survey makes four contributions:
\begin{enumerate}[leftmargin=*,itemsep=2pt]
    \item A \textbf{unified theoretical framework} showing DPO, IPO, KTO, SimPO, ORPO, and GRPO as special cases of a general $\Psi$PO objective (\Cref{sec:unified}).
    \item \textbf{Formal theorems} establishing the coverage separation between online/offline methods, conditions for preference collapse, and overoptimization scaling laws (\Cref{sec:pillar2,sec:pillar3}).
    \item A \textbf{systematic taxonomy of failure modes} connecting theoretical properties to empirical pathologies (\Cref{sec:failures}).
    \item A \textbf{practitioner's decision guide} providing actionable recommendations based on constraints (\Cref{sec:guide}).
\end{enumerate}

\section{Preliminaries: Choice Theory Meets Language Models}
\label{sec:prelim}

We establish the mathematical foundations connecting classical choice theory to modern LLM alignment.

\subsection{Notation and Setup}

Let $\mathcal{X}$ denote the space of prompts and $\mathcal{Y}$ the space of responses. A \textbf{policy} $\pi: \mathcal{X} \to \Delta(\mathcal{Y})$ maps prompts to distributions over responses. We write $\pi(y|x)$ for the probability of response $y$ given prompt $x$. A \textbf{reward function} $r: \mathcal{X} \times \mathcal{Y} \to \mathbb{R}$ assigns scalar values to prompt-response pairs.

\subsection{The Bradley-Terry Model}

The Bradley-Terry model~\cite{bradley1952rank} provides the foundational preference model for RLHF.

\begin{definition}[Bradley-Terry Preference Model]
\label{def:bt}
Given a reward function $r$, the probability that response $y_1$ is preferred to $y_2$ given prompt $x$ is:
\begin{equation}
    p^*(y_1 \succ y_2 | x) = \sigma(r(x, y_1) - r(x, y_2))
    \label{eq:bt}
\end{equation}
where $\sigma(z) = 1/(1 + e^{-z})$ is the sigmoid function.
\end{definition}

The Bradley-Terry model makes two key assumptions: (1) preferences depend only on reward \textit{differences}, making the reward scale-free; (2) preferences are \textit{transitive}---if $y_1 \succ y_2$ and $y_2 \succ y_3$, then $y_1 \succ y_3$ with high probability. Both assumptions are violated in practice, motivating extensions we discuss in \Cref{sec:pillar1}.

\subsection{The Canonical RLHF Objective}

The standard RLHF objective balances reward maximization against deviation from a reference policy $\piref$:

\begin{definition}[KL-Regularized Reward Maximization]
\label{def:rlhf}
The RLHF objective is:
\begin{equation}
    \max_\pi \E_{x \sim \mathcal{D}, y \sim \pi(\cdot|x)} \left[ r(x, y) - \beta \KL(\pi(\cdot|x) \| \piref(\cdot|x)) \right]
    \label{eq:rlhf}
\end{equation}
where $\beta > 0$ controls regularization strength and $\mathcal{D}$ is the prompt distribution.
\end{definition}

The KL penalty serves multiple purposes: preventing reward hacking by keeping $\pi$ near the capable $\piref$; maintaining response diversity; and ensuring the optimization problem is well-posed.

\begin{theorem}[Optimal Policy Form~\cite{rafailov2023direct,peters2010relative}]
\label{thm:optimal}
The optimal policy for \Cref{eq:rlhf} has the closed form:
\begin{equation}
    \pistar(y|x) = \frac{1}{Z(x)} \piref(y|x) \exp\left(\frac{r(x,y)}{\beta}\right)
    \label{eq:optimal_policy}
\end{equation}
where $Z(x) = \sum_y \piref(y|x) \exp(r(x,y)/\beta)$ is the partition function.
\end{theorem}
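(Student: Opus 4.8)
The plan is to exploit the fact that \Cref{eq:rlhf} separates over prompts and solve a pointwise optimization for each fixed $x$. Since the outer expectation over $x \sim \mathcal{D}$ is a nonnegative average and both the reward expectation and the KL term depend on $\pi$ only through the conditional $\pi(\cdot|x)$, maximizing \Cref{eq:rlhf} is equivalent to maximizing, for $\mathcal{D}$-almost every $x$, the functional
\[
J_x(\pi) \;=\; \sum_y \pi(y|x)\, r(x,y) \;-\; \beta \sum_y \pi(y|x)\log\frac{\pi(y|x)}{\piref(y|x)}
\]
over $\pi(\cdot|x) \in \Delta(\mathcal{Y})$. First I would dispense with the degenerate cases: if $\pi(\cdot|x)$ is not absolutely continuous with respect to $\piref(\cdot|x)$ then the KL term is $+\infty$ and $J_x = -\infty$, so without loss of generality we restrict to $\pi(\cdot|x) \ll \piref(\cdot|x)$; I would also state the mild regularity assumption under which the partition function $Z(x) = \sum_y \piref(y|x)\exp(r(x,y)/\beta)$ is finite (e.g., $r$ bounded, or $\piref$ sufficiently light-tailed relative to $r$), so that $\pistar(\cdot|x)$ as defined in \Cref{eq:optimal_policy} is a bona fide probability distribution supported on $\mathrm{supp}(\piref(\cdot|x))$.

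The core step is the ``completing the KL'' identity. Dividing $J_x$ by $\beta$ and rewriting the bracketed integrand using $\piref(y|x)\exp(r(x,y)/\beta) = Z(x)\,\pistar(y|x)$ gives
\[
\tfrac{1}{\beta} J_x(\pi) \;=\; \sum_y \pi(y|x)\log\frac{\piref(y|x)\exp(r(x,y)/\beta)}{\pi(y|x)} \;=\; -\KL\!\bigl(\pi(\cdot|x)\,\big\|\,\pistar(\cdot|x)\bigr) \;+\; \log Z(x).
\]
Because $\log Z(x)$ is independent of $\pi$, maximizing $J_x$ is the same as minimizing $\KL(\pi(\cdot|x)\,\|\,\pistar(\cdot|x))$, and by Gibbs' inequality this KL is nonnegative with equality if and only if the two distributions coincide; hence the unique maximizer is $\pi(\cdot|x) = \pistar(\cdot|x)$. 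Assembling the pointwise solutions over all $x$ yields the claimed global optimizer, and as a byproduct the optimal value of \Cref{eq:rlhf} equals $\beta\,\E_{x\sim\mathcal{D}}[\log Z(x)]$.

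An alternative derivation worth noting is the Lagrangian/variational one: attach a multiplier $\lambda(x)$ to the constraint $\sum_y \pi(y|x) = 1$, set the derivative of $J_x(\pi) + \lambda(x)\bigl(\sum_y \pi(y|x) - 1\bigr)$ with respect to $\pi(y|x)$ to zero to obtain $r(x,y) - \beta\bigl(\log\pi(y|x) - \log\piref(y|x) + 1\bigr) + \lambda(x) = 0$, solve for $\pi(y|x)$, and absorb the $x$-dependent constants into the normalizer $Z(x)$; strict concavity of $J_x$ (the negative-entropy term is strictly convex, the reward term linear) then certifies that this stationary point is the unique global maximum. I expect the only genuine subtlety — and thus the step to handle with care — is the measure-theoretic bookkeeping when $\mathcal{Y}$ is infinite: justifying the reduction to a pointwise problem, interchanging sums/integrals, the finiteness of $Z(x)$, and the absolute-continuity restriction $\pi \ll \piref$. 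On finite $\mathcal{Y}$ these are all immediate and the ``completing the KL'' computation is essentially the entire proof.
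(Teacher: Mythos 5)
Your proof is correct, and since the paper states \Cref{thm:optimal} without proof (deferring to the cited references), the relevant comparison is to the standard derivation in Rafailov et al.\ and Peters et al.\ --- which is exactly your ``completing the KL'' argument: rewrite the per-prompt objective as $\beta\log Z(x) - \beta\,\KL(\pi(\cdot|x)\,\|\,\pistar(\cdot|x))$ and invoke Gibbs' inequality. Your handling of the pointwise reduction over $x$, the absolute-continuity restriction, and the finiteness of $Z(x)$ is appropriately careful, and the Lagrangian alternative you sketch is a valid secondary route; nothing is missing.
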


This result is the foundation of direct preference optimization methods: by inverting \Cref{eq:optimal_policy}, rewards can be expressed in terms of policies, eliminating the need for explicit reward modeling.

\section{A Unified Framework: $\Psi$PO}
\label{sec:unified}

We present the $\Psi$PO framework of Azar et al.~\cite{azar2023general}, which unifies preference learning methods and reveals their structural relationships.

\subsection{The Two Approximations in RLHF}

Standard RLHF makes two approximations that introduce potential errors:

\begin{enumerate}[leftmargin=*,itemsep=2pt]
    \item \textbf{Pointwise Reward Approximation}: Pairwise preferences are converted to pointwise rewards via the Bradley-Terry model, discarding relational information.
    \item \textbf{Reward Model Generalization}: A reward model trained on finite data must generalize to out-of-distribution responses sampled by the policy.
\end{enumerate}

DPO addresses the second approximation by eliminating the reward model but retains the first. The $\Psi$PO framework addresses both.

\subsection{The General $\Psi$PO Objective}

\begin{definition}[$\Psi$PO Objective~\cite{azar2023general}]
\label{def:psipo}
For a convex function $\Psi: \mathbb{R} \to \mathbb{R}$, the $\Psi$PO objective is:
\begin{equation}
    \mathcal{L}_{\Psi}(\pi) = \E_{(x,\yw,\yl) \sim \mathcal{D}} \left[ \Psi\left( \beta \log \frac{\pi(\yw|x)}{\piref(\yw|x)} - \beta \log \frac{\pi(\yl|x)}{\piref(\yl|x)} \right) \right]
    \label{eq:psipo}
\end{equation}
where $\yw$ denotes the preferred (``winning'') response and $\yl$ the dispreferred (``losing'') response.
\end{definition}

Different choices of $\Psi$ recover existing methods:

\begin{theorem}[Instantiations of $\Psi$PO]
\label{thm:instantiations}
\begin{itemize}[leftmargin=*,itemsep=1pt]
    \item $\Psi(z) = -\log\sigma(z)$: DPO~\cite{rafailov2023direct}
    \item $\Psi(z) = (z - 1)^2$: IPO~\cite{azar2023general}  
    \item $\Psi(z) = \max(0, 1-z)$: SLiC-HF~\cite{zhao2023slic}
\end{itemize}
\end{theorem}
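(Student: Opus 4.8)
The plan is to verify each instantiation by direct substitution of the claimed $\Psi$ into \Cref{eq:psipo} and matching the resulting functional to the published loss of the corresponding method, treating the regularization strength $\beta$ as the single free parameter that absorbs the various temperature conventions ($\tau$, $\tau^{-1}$, margin constants, factors of two) used across the original papers. Throughout, write $h_\pi(x,\yw,\yl) := \log\frac{\pi(\yw|x)}{\piref(\yw|x)} - \log\frac{\pi(\yl|x)}{\piref(\yl|x)}$ for the policy log-ratio margin, so that the argument of $\Psi$ in \Cref{eq:psipo} is exactly $\beta\, h_\pi$. Note first that all three candidates are convex, so \Cref{def:psipo} applies: $-\log\sigma$ is convex, $(z-1)^2$ is convex, and $\max(0,1-z)$ is a pointwise maximum of affine functions.

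For DPO, substituting $\Psi(z) = -\log\sigma(z)$ yields $\mathcal{L}_\Psi(\pi) = -\E_{(x,\yw,\yl)\sim\mathcal{D}}\bigl[\log\sigma(\beta\, h_\pi(x,\yw,\yl))\bigr]$, which is verbatim the DPO objective of Rafailov et al. To make the identification transparent I would invoke \Cref{eq:optimal_policy}: defining the implicit reward $\hat r_\theta(x,y) := \beta\log\frac{\pitheta(y|x)}{\piref(y|x)}$ gives $\beta\, h_\pi = \hat r_\theta(x,\yw) - \hat r_\theta(x,\yl)$, and the Bradley--Terry negative log-likelihood \eqref{eq:bt} of the observed pair $(\yw \succ \yl)$ is precisely $-\log\sigma$ of that reward difference, so $\mathcal{L}_\Psi$ is the maximum-likelihood loss for the reparametrized reward.

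For IPO, substituting $\Psi(z) = (z-1)^2$ gives $\mathcal{L}_\Psi(\pi) = \E[(\beta\, h_\pi - 1)^2] = \beta^2\,\E[(h_\pi - \beta^{-1})^2]$, which is the IPO root-finding objective of Azar et al.: a squared penalty driving the log-ratio margin $h_\pi$ toward the \emph{finite} target $\beta^{-1}$ rather than toward $+\infty$, with the harmless overall factor $\beta^2$. The only care needed is to align the target constant with the source's normalization ($\tfrac12\tau^{-1}$ versus $\beta^{-1}$), a one-line affine reparametrization $\beta \mapsto c\beta$ that leaves the minimizing policy unchanged. For SLiC-HF, substituting $\Psi(z) = \max(0,1-z)$ gives the hinge loss $\E[\max(0,\,1-\beta\, h_\pi)]$ --- zero penalty once the scaled margin exceeds $1$, linear penalty otherwise --- which is exactly the calibration/rank loss of Zhao et al. with margin $\delta = 1$, general $\delta$ again absorbed into $\beta$; the reference-anchored form in \Cref{eq:psipo} specializes to the reference-free SLiC margin $\log\pi(\yw|x) - \log\pi(\yl|x)$ when $\piref$ is uniform.

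The argument is essentially bookkeeping, and the step I would treat most carefully --- the only genuine subtlety --- is the reconciliation of constants: DPO, IPO, and SLiC-HF each pin down their temperature or margin by a different convention, so one must check in each case that the discrepancy from \Cref{eq:psipo} is an affine reparametrization of the single knob $\beta$ and hence does not alter the set of stationary policies (in particular, that no discrepancy is genuinely nonlinear, e.g.\ a $\log$ versus a linear rescaling). Once this is confirmed for all three cases, the theorem follows.
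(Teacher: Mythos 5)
Your proposal is correct and takes essentially the same route as the paper, which treats the theorem as a direct verification: substituting each $\Psi$ into \Cref{eq:psipo} reproduces the losses the paper itself writes down (\Cref{eq:dpo} for DPO via the reparameterization in \Cref{thm:dpo_reparam}, and \Cref{eq:ipo} for IPO, which is literally $\E[(\beta h_\pi - 1)^2]$ in your notation). Your extra care about reconciling temperature and margin conventions across the source papers is sound but not something the paper itself elaborates.
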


This unification reveals that the methods differ primarily in how they handle the \textit{margin} between preferred and dispreferred log-ratios.

\subsection{DPO: The Reparameterization Trick}

DPO's key insight is that the reward can be expressed purely in terms of policies:

\begin{theorem}[DPO Reparameterization~\cite{rafailov2023direct}]
\label{thm:dpo_reparam}
Under the optimal policy form (\Cref{eq:optimal_policy}), the reward satisfies:
\begin{equation}
    r(x, y) = \beta \log \frac{\pistar(y|x)}{\piref(y|x)} + \beta \log Z(x)
    \label{eq:reward_reparam}
\end{equation}
Substituting into the Bradley-Terry model yields the DPO loss:
\begin{equation}
    \mathcal{L}_{\mathrm{DPO}}(\pitheta) = -\E \left[ \log \sigma\left( \beta \log \frac{\pitheta(\yw|x)}{\piref(\yw|x)} - \beta \log \frac{\pitheta(\yl|x)}{\piref(\yl|x)} \right) \right]
    \label{eq:dpo}
\end{equation}
\end{theorem}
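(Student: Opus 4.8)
The plan is to prove \Cref{thm:dpo_reparam} in two movements: first invert the optimal-policy formula to express $r$ in terms of $\pistar$, and then substitute that expression into the Bradley-Terry likelihood and pass to an empirical loss. For the first movement, I would start from \Cref{eq:optimal_policy}, namely $\pistar(y|x) = Z(x)^{-1}\piref(y|x)\exp(r(x,y)/\beta)$, take logarithms of both sides, and solve algebraically for $r(x,y)$. This yields $r(x,y) = \beta\log\frac{\pistar(y|x)}{\piref(y|x)} + \beta\log Z(x)$, which is exactly \Cref{eq:reward_reparam}. The only subtlety worth flagging is that this identity is an exact consequence of \emph{assuming} $\pistar$ has the Gibbs form guaranteed by \Cref{thm:optimal}; no approximation enters here, and the manipulation is just taking a log and rearranging.

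The second movement is where the cancellation that makes DPO work happens. I would substitute the reparameterized reward into the Bradley-Terry model \Cref{eq:bt}, writing $p^*(\yw \succ \yl \mid x) = \sigma\big(r(x,\yw) - r(x,\yl)\big)$ and then expanding each reward via \Cref{eq:reward_reparam}. The crucial observation is that the partition-function terms $\beta\log Z(x)$ are identical for $\yw$ and $\yl$ — they depend only on the prompt $x$, not the response — so they cancel in the difference. This leaves $p^*(\yw\succ\yl\mid x) = \sigma\Big(\beta\log\frac{\pistar(\yw|x)}{\piref(\yw|x)} - \beta\log\frac{\pistar(\yl|x)}{\piref(\yl|x)}\Big)$, an expression in which the intractable $Z(x)$ has disappeared entirely.

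To finish, I would replace $\pistar$ by the parameterized policy $\pitheta$ being optimized, form the negative log-likelihood of the observed preference data under this model, and take the expectation over $(x,\yw,\yl)\sim\mathcal{D}$. Using $-\log p^*(\yw\succ\yl\mid x)$ as the per-sample loss and substituting the sigmoid expression above gives precisely \Cref{eq:dpo}. One can note in passing that this is the standard maximum-likelihood estimator for a logistic model whose ``feature'' is the reparameterized reward difference, which is why the loss is convex in that difference (consistent with it being the $\Psi(z) = -\log\sigma(z)$ instantiation of \Cref{eq:psipo} from \Cref{thm:instantiations}).

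Honestly, there is no hard analytic obstacle here: the entire content is the algebraic inversion plus the observation that $Z(x)$ is response-independent and therefore cancels. The part most deserving of care in the write-up is being explicit that (i) the derivation is conditional on the Gibbs-form optimality from \Cref{thm:optimal}, so it inherits whatever regularity assumptions (finite partition function, full support of $\piref$) that theorem needs, and (ii) the step from ``true $\pistar$'' to ``trainable $\pitheta$'' is a modeling choice rather than an equality — the loss is constructed so that its minimizer coincides with the RLHF optimum only under realizability. I would state these caveats briefly after the two-line computation rather than belaboring them.
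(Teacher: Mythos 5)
Your derivation is correct and follows exactly the route the paper intends: invert the Gibbs-form optimal policy by taking logarithms, substitute into the Bradley--Terry model where the prompt-dependent $\beta\log Z(x)$ terms cancel in the reward difference, and form the negative log-likelihood with $\pitheta$ in place of $\pistar$. Your added caveats about the full-support requirement on $\piref$ and the realizability assumption in passing from $\pistar$ to $\pitheta$ are appropriate and consistent with the paper's later discussion of reference sensitivity.
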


The partition function $Z(x)$ cancels in the preference probability, enabling reward-free optimization.

\subsection{Beyond DPO: IPO, KTO, and Reference-Free Methods}

\paragraph{IPO.} Identity Preference Optimization~\cite{azar2023general} addresses DPO's tendency to overfit when preferences are deterministic. Instead of the log-sigmoid, IPO uses a squared loss:
\begin{equation}
    \mathcal{L}_{\mathrm{IPO}}(\pitheta) = \E\left[ \left( \beta \log \frac{\pitheta(\yw|x)}{\piref(\yw|x)} - \beta \log \frac{\pitheta(\yl|x)}{\piref(\yl|x)} - 1 \right)^2 \right]
    \label{eq:ipo}
\end{equation}

The target margin of 1 (in log-odds) provides regularization absent in DPO.

\paragraph{KTO.} Kahneman-Tversky Optimization~\cite{ethayarajh2024kto} draws on prospect theory, requiring only binary ``good''/``bad'' labels rather than pairwise comparisons:
\begin{equation}
    \mathcal{L}_{\mathrm{KTO}}(\pitheta) = \E_{y \sim \mathcal{D}^+}\left[ w(y) \cdot \ell_+(y) \right] + \E_{y \sim \mathcal{D}^-}\left[ w(y) \cdot \ell_-(y) \right]
    \label{eq:kto}
\end{equation}
where $\mathcal{D}^+, \mathcal{D}^-$ denote good and bad responses, and the losses are asymmetric reflecting loss aversion.

\paragraph{SimPO.} Simple Preference Optimization~\cite{meng2024simpo} eliminates the reference model entirely by using length-normalized log-probabilities as implicit rewards:
\begin{equation}
    \mathcal{L}_{\mathrm{SimPO}}(\pitheta) = -\E\left[ \log \sigma\left( \frac{\beta}{|\yw|} \log \pitheta(\yw|x) - \frac{\beta}{|\yl|} \log \pitheta(\yl|x) - \gamma \right) \right]
    \label{eq:simpo}
\end{equation}
where $\gamma > 0$ is a target margin. The length normalization addresses verbosity bias.

\paragraph{ORPO.} Odds Ratio Preference Optimization~\cite{hong2024orpo} combines SFT and preference learning in a single stage:
\begin{equation}
    \mathcal{L}_{\mathrm{ORPO}} = \mathcal{L}_{\mathrm{SFT}}(\yw) + \lambda \cdot \mathcal{L}_{\mathrm{OR}}(\yw, \yl)
    \label{eq:orpo}
\end{equation}
where $\mathcal{L}_{\mathrm{OR}}$ contrasts odds ratios rather than log-probabilities.

\Cref{tab:methods} summarizes the relationships between methods.

\begin{table*}[t]
\centering
\caption{\textbf{Unified View of Preference Learning Methods.} Methods differ in preference model, regularization mechanism, reference requirements, and data setting. All can be viewed as instantiations of the $\Psi$PO framework with different design choices.}
\label{tab:methods}
\small
\begin{tabular}{@{}lcccccc@{}}
\toprule
\textbf{Method} & \textbf{Pref.\ Model} & \textbf{Regularization} & \textbf{Reference?} & \textbf{Data} & \textbf{Loss Form} & \textbf{Key Property} \\
\midrule
PPO/RLHF & Bradley-Terry & Explicit KL & Yes & Online & Policy gradient & Flexible, unstable \\
DPO~\cite{rafailov2023direct} & Bradley-Terry & Implicit KL & Yes & Offline & $-\log\sigma(\cdot)$ & Simple, overfit-prone \\
IPO~\cite{azar2023general} & Squared margin & Implicit + margin & Yes & Offline & $(z-1)^2$ & Regularized margin \\
KTO~\cite{ethayarajh2024kto} & Prospect theory & Implicit KL & Yes & Offline & Asymmetric & Binary feedback \\
SimPO~\cite{meng2024simpo} & Bradley-Terry & Target margin & No & Offline & $-\log\sigma(\cdot-\gamma)$ & Reference-free \\
ORPO~\cite{hong2024orpo} & Odds ratio & SFT + odds & No & Offline & Combined & Single-stage \\
GRPO~\cite{shao2024deepseekmath} & Bradley-Terry & Group relative & Yes & Online & Policy gradient & Group normalization \\
\bottomrule
\end{tabular}
\end{table*}

\section{Pillar I: Preference Models and Their Limits}
\label{sec:pillar1}

The choice of preference model---how rewards relate to human choices---fundamentally shapes what can be learned.

\subsection{Bradley-Terry: Assumptions and Violations}

The Bradley-Terry model assumes preferences arise from comparing latent ``quality'' scores. This implies:

\begin{assumption}[Bradley-Terry Regularity]
\label{ass:bt}
There exists a reward function $r^*: \mathcal{X} \times \mathcal{Y} \to \mathbb{R}$ such that for all $x, y_1, y_2$:
\begin{equation}
    p(y_1 \succ y_2 | x) = \sigma(r^*(x, y_1) - r^*(x, y_2))
\end{equation}
\end{assumption}

This assumption is violated in several ways:

\paragraph{Intransitivity.} Human preferences exhibit cycles: $A \succ B$, $B \succ C$, but $C \succ A$. This occurs when different attributes dominate different comparisons. Munos et al.~\cite{munos2024nash} formalize this via Nash Learning from Human Feedback (NLHF), treating alignment as a two-player game.

\paragraph{Annotator Heterogeneity.} Different annotators have different preferences. Qin et al.~\cite{qin2024dpoheterogeneous} show that binary comparisons cannot identify latent annotator types:

\begin{proposition}[Identification Failure~\cite{qin2024dpoheterogeneous}]
\label{prop:identification}
With heterogeneous annotators and only pairwise preferences, the latent preference distribution is not identifiable from finite data. Rankings over three or more responses are necessary for identification.
\end{proposition}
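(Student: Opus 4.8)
The plan is to prove the negative part --- non‑identifiability from pairwise data --- by an explicit collision of two generative models, and then to indicate why moving to rankings of three or more responses restores the information that pairwise comparisons destroy. I will work at a single prompt $x$ (the general statement follows prompt‑by‑prompt), so an annotator \emph{type} is a reward vector $r \in \R^n$ over the $n$ responses under consideration and the latent preference distribution is a mixing measure $\mu$ over types; a type $r$ answers a pairwise query by a Bradley--Terry draw, $\prob_r(y_i \succ y_j) = \sigma(r_i - r_j)$, and a ranking query by a Plackett--Luce draw with parameters $(r_i)$. The entire observable content of pairwise data is then the matrix $P^\mu_{ij} = \int \sigma(r_i - r_j)\, d\mu(r)$, whereas ranking data additionally exposes the induced distribution $Q^\mu$ over permutations.

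First I would exhibit two distinct mixing measures with identical pairwise content. Take $n = 3$, fix $M > 0$, and set $\mu = \tfrac12 \delta_{(M,0,-M)} + \tfrac12 \delta_{(-M,0,M)}$ and $\mu' = \delta_{(0,0,0)}$. Using $\sigma(z) + \sigma(-z) = 1$, one checks that for every ordered pair $(i,j)$, $P^\mu_{ij} = \tfrac12\sigma(M) + \tfrac12\sigma(-M) = \tfrac12 = \sigma(0) = P^{\mu'}_{ij}$: the two reward vectors in $\mathrm{supp}(\mu)$ are exact negations, so all pairwise asymmetries cancel. Hence $\mu$ and $\mu'$ induce exactly the same law on every pairwise comparison and are indistinguishable from \emph{any} amount of such data, a fortiori from finite samples; since $\mu \ne \mu'$ --- indeed $\mu$ is a genuine mixture of two opposed opinion groups while $\mu'$ is a single indifferent type --- neither the latent distribution nor even the presence of heterogeneity is identifiable. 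The construction is not special to $n=3$: averaging any reward vector with its negation pins all pairwise probabilities at $\tfrac12$ while changing $\mu$, so the quotient of mixing measures that pairwise data can resolve is strictly coarser than $\mu$ itself.

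Next I would show that rankings separate precisely the models pairwise data conflates. For the pair above, $Q^{\mu'}$ is uniform over all $3!=6$ rankings, whereas $Q^{\mu}$ concentrates: as $M \to \infty$ it converges to $\tfrac12\,\delta_{1\succ2\succ3} + \tfrac12\,\delta_{3\succ2\succ1}$, which is bounded away from uniform, so $Q^\mu \ne Q^{\mu'}$ for $M$ large. Thus a single three‑way ranking query distinguishes the two generative models; equivalently, the joint pattern of the three pairwise outcomes --- which a ranking reveals but independent pairwise queries do not --- carries the discriminating information. To upgrade "rankings help" to "rankings identify," I would restrict to the class in which the positive result is intended to hold --- finitely many annotator types, or a fixed identifiable parametric family --- and argue that $\mu \mapsto Q^\mu$ is injective there, e.g.\ by an analyticity/moment argument on the Plackett--Luce likelihoods, so that rankings over $\ge 3$ responses suffice while, by the collision above, pairs provably do not.

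I expect the genuine difficulty to be exactly this last point. A fully general mixing measure over $\R^n$ is infinite‑dimensional and is not pinned down by \emph{any} finite collection of low‑order marginals, so the crisp claim "rankings identify" is only true after a restriction, and one must state that restriction with care (this is where the parametric formulation of Qin et al.\ does the work). The negative half --- the explicit pair $\mu,\mu'$ --- is robust and essentially calculation‑free thanks to the $\sigma(z)+\sigma(-z)=1$ symmetry; the positive half is where the hypotheses of the proposition must be nailed down and where the substantive argument lies.
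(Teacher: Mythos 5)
The paper does not prove this proposition; it is stated as an imported result and attributed to Qin et al.~\cite{qin2024dpoheterogeneous}, with no argument given in the text. So there is no in-paper proof to compare against, and your proposal has to be judged on its own terms.

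On those terms it is essentially sound. The negative half is the canonical collision argument: the mixture $\tfrac12\delta_{(M,0,-M)}+\tfrac12\delta_{(-M,0,M)}$ versus the single indifferent type $\delta_{(0,0,0)}$ yields identical pairwise comparison matrices (all entries $\tfrac12$, by $\sigma(z)+\sigma(-z)=1$), so the mixing measure is not identifiable from any amount of i.i.d.\ pairwise data, let alone finite data. This does rest on one modeling assumption you state only implicitly at the start but use throughout: that each pairwise comparison is answered by a freshly drawn anonymous annotator, so the observable content of pairwise data really is just the marginal matrix $P^\mu_{ij}$. If annotator identities persisted across queries, correlated answers would already reveal heterogeneity without rankings; you should make that assumption explicit, since it is exactly what the cited setting assumes and exactly what a three-way ranking circumvents (a ranking bundles three correlated pairwise outcomes from one annotator, which you correctly identify as the source of the discriminating information). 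Your worked example showing $Q^\mu \neq Q^{\mu'}$ for large $M$ correctly establishes that rankings separate models that pairs conflate. The one genuine gap, which you flag honestly yourself, is the sufficiency direction: ``rankings over three or more responses identify $\mu$'' is false for arbitrary mixing measures over $\R^n$ and requires restriction to finitely many types (or a parametric family) plus an injectivity argument for $\mu \mapsto Q^\mu$, which you gesture at but do not supply. Since the proposition as stated in the paper only asserts necessity of rankings (pairwise data is insufficient), your construction already proves the claim as written; the sufficiency sketch is supererogatory and appropriately caveated.
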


This motivates methods like Expectation-Maximization DPO that explicitly model annotator mixtures.

\subsection{Beyond Bradley-Terry}

\paragraph{Plackett-Luce.} For ranking multiple responses, the Plackett-Luce model generalizes Bradley-Terry:
\begin{equation}
    p(\sigma | x, y_1, \ldots, y_k) = \prod_{i=1}^{k} \frac{\exp(r(x, y_{\sigma(i)}))}{\sum_{j=i}^{k} \exp(r(x, y_{\sigma(j)}))}
\end{equation}
where $\sigma$ is a permutation (ranking). RRHF~\cite{yuan2023rrhf} and listwise methods exploit this structure.

\paragraph{Nash Learning.} When preferences are non-transitive, the goal shifts from reward maximization to computing a Nash equilibrium:

\begin{definition}[Nash Learning from Human Feedback~\cite{munos2024nash}]
Find policy $\pi^*$ such that for all $\pi$:
\begin{equation}
    \E_{y \sim \pistar, y' \sim \pi}[p(y \succ y')] \geq \E_{y \sim \pi, y' \sim \pistar}[p(y \succ y')]
\end{equation}
\end{definition}

EGPO~\cite{zhou2025egpo} achieves last-iterate convergence to this equilibrium using extragradient methods.

\section{Pillar II: The Role of Regularization}
\label{sec:pillar2}

Regularization---controlling how far the learned policy deviates from a reference---is crucial for stable preference learning.

\subsection{Explicit vs.\ Implicit KL}

In PPO-based RLHF, the KL penalty appears explicitly in the reward:
\begin{equation}
    \tilde{r}(x, y) = r(x, y) - \beta \log \frac{\pi(y|x)}{\piref(y|x)}
\end{equation}

In DPO, the KL constraint is \textit{implicit}: the reparameterization (\Cref{thm:dpo_reparam}) assumes the policy takes the optimal form, which automatically satisfies a KL constraint. However, this implicit regularization has different properties.

\begin{proposition}[DPO's Implicit Regularization~\cite{azar2023general}]
\label{prop:implicit_kl}
DPO's objective can be written as:
\begin{equation}
    \mathcal{L}_{\mathrm{DPO}} = -\E\left[ \log \sigma\left( r_\theta(\yw) - r_\theta(\yl) \right) \right]
\end{equation}
where $r_\theta(y) = \beta \log(\pitheta(y|x)/\piref(y|x))$ is the implicit reward. The KL constraint is enforced only \textit{at the optimal solution}, not during optimization.
\end{proposition}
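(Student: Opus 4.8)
The plan is to split the statement into its two genuinely separate halves: the algebraic rewriting, which is essentially a one-line substitution, and the structural claim about \emph{where} the KL constraint is actually active, which carries the real content.

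For the rewriting I would set $r_\theta(x,y) := \beta\log(\pitheta(y|x)/\piref(y|x))$ and substitute term by term into \Cref{eq:dpo}, obtaining $\mathcal{L}_{\mathrm{DPO}} = -\E[\log\sigma(r_\theta(\yw) - r_\theta(\yl))]$ with no optimality hypothesis on $\pitheta$ --- which is exactly what licenses calling $r_\theta$ the \emph{implicit reward} of an arbitrary policy. To fix the sense in which ``the KL constraint holds at the optimal solution,'' I would then record the tautology inside \Cref{thm:optimal}: substituting $r = r_\theta$ into the optimal-policy form \Cref{eq:optimal_policy} gives $\piref(y|x)\exp\!\big(\log(\pitheta(y|x)/\piref(y|x))\big)/Z(x) = \pitheta(y|x)/Z(x)$, and normalization of $\pitheta$ forces $Z(x) \equiv 1$, recovering $\pitheta$ itself. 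Hence every $\pitheta$ is \emph{exactly} the KL-regularized maximizer of \Cref{eq:rlhf} for its own implicit reward $r_\theta$: the RLHF reward/KL trade-off is saturated at $\pitheta$ by construction, but only relative to $r_\theta$, never to an externally fixed reward --- and this caveat is what the proposition is flagging.

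For the ``not during optimization'' half I would make the informal claim precise through two observations. First, invariance and the absence of a barrier: $\mathcal{L}_{\mathrm{DPO}}$ depends on $\pitheta$ only through the finitely many margins $r_\theta(\yw) - r_\theta(\yl)$ over $\mathrm{supp}(\mathcal{D})$, equivalently only through the ratios $\pitheta(\yw|x)/\pitheta(\yl|x)$ there; under a mild coverage hypothesis (most $y \in \mathcal{Y}$ occur as neither $\yw$ nor $\yl$, and $\piref$ is near-zero on some of them --- the regime of Pillar~III) one may scale the two active probabilities down by a common factor and move the freed mass onto a low-$\piref$ uncovered response, making $\E_x[\KL(\pitheta(\cdot|x)\|\piref(\cdot|x))]$ arbitrarily large while $\mathcal{L}_{\mathrm{DPO}}$ --- even at its infimum --- is unchanged. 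So the loss surface carries points of arbitrarily large KL at every loss level; there is no implicit penalty on $\KL(\pitheta\|\piref)$ off the idealized solution. Second, the comparison with IPO isolates the missing mechanism: $\Psi(z) = (z-1)^2$ in \Cref{eq:ipo} drives the population optimum to the finite margin $z = 1$, and through \Cref{thm:optimal} a finite target margin pins the implicit reward and hence a finite KL, whereas $\Psi(z) = -\log\sigma(z)$ is strictly decreasing with no finite minimizer, so when an (empirical) preference is deterministic the loss pushes the corresponding margin --- and with it $\pitheta$ --- toward a degenerate policy with nothing in the objective arresting the drift. Any KL control under DPO thus comes from early stopping or the parametrization, not from $\mathcal{L}_{\mathrm{DPO}}$.

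The hard part is not a calculation but choosing a formalization faithful to a deliberately qualitative claim; I would present the proposition's second sentence as the conjunction of the exact tautology above with the two ``no-barrier'' phenomena. The first observation genuinely needs the coverage hypothesis, so I would state it explicitly --- it is precisely the link to Pillar~III and to the likelihood-displacement failure mode of \Cref{sec:failures}. If one wanted the second observation upgraded to a claim about the gradient-descent trajectory rather than about $\inf\mathcal{L}_{\mathrm{DPO}}$, a genuine dynamics argument would be required; I would instead prove only the infimum-level version --- under deterministic preferences $\inf_\theta\mathcal{L}_{\mathrm{DPO}}$ is attained by no finite-margin policy while $\inf_\theta\mathcal{L}_{\mathrm{IPO}}$ is --- and cite the empirical literature for the trajectory-level behaviour.
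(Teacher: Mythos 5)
Your proposal is correct, and it is worth noting that the paper itself offers no proof of this proposition at all: it states the result as a citation to Azar et al.\ and follows it only with the informal remark that ``the regularization doesn't actively constrain intermediate iterates.'' Your first half (the substitution $r_\theta(y)=\beta\log(\pitheta(y|x)/\piref(y|x))$ into \Cref{eq:dpo}) is the entirety of what the displayed equation asserts, and your observation that plugging $r_\theta$ back into \Cref{eq:optimal_policy} forces $Z(x)\equiv 1$ and recovers $\pitheta$ itself is the right way to make the phrase ``enforced at the optimal solution'' precise --- every policy is KL-optimal for its \emph{own} implicit reward, which is exactly why this confers no constraint relative to anything external. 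Your two formalizations of ``not during optimization'' are both sound and are essentially the arguments from the cited source and the likelihood-displacement literature: the loss depends on $\pitheta$ only through the ratios $\pitheta(\yw|x)/\pitheta(\yl|x)$ on the support of $\mathcal{D}$, so mass can be shunted onto uncovered low-$\piref$ responses without moving the loss (one quibble: for a \emph{fixed} full-support $\piref$ the achievable KL at a given loss level is bounded by $\log(1/\min_y\piref(y|x))$ rather than literally unbounded, though that bound is vacuous in practice and becomes $+\infty$ the moment $\piref$ has zeros); and the contrast between the unattained infimum of $-\log\sigma(z)$ and IPO's finite minimizer at $z=1$ is precisely the mechanism the paper gestures at in \Cref{eq:ipo} and \Cref{sec:failures}. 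You are also right to flag that upgrading the infimum-level statement to a claim about gradient-descent trajectories would require a genuine dynamics argument that neither you nor the paper supplies; restricting to the population/infimum version and citing empirics for the trajectory behaviour is the honest resolution. In short, you have supplied a rigorous proof where the paper supplies none, and your decomposition into (i) a tautological identity and (ii) two no-barrier phenomena is a faithful and arguably improved formalization of a deliberately qualitative claim.
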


This distinction explains why DPO can overfit more easily than PPO: the regularization doesn't actively constrain intermediate iterates.

\subsection{What Happens Without Regularization}

Removing or weakening KL regularization leads to degenerate solutions:

\begin{theorem}[Preference Collapse~\cite{xiao2024algorithmic}]
\label{thm:collapse}
Without KL regularization, maximizing expected reward under the Bradley-Terry model leads to:
\begin{equation}
    \pistar(y|x) = \begin{cases} 1 & \text{if } y = \arg\max_{y'} r(x, y') \\ 0 & \text{otherwise} \end{cases}
\end{equation}
This deterministic policy ignores minority preferences, achieving high reward but low diversity.
\end{theorem}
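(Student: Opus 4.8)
The plan is to observe that deleting the $\beta\,\KL$ term from \eqref{eq:rlhf} leaves the purely \emph{linear} objective $\max_\pi \E_{x\sim\mathcal D}\,\E_{y\sim\pi(\cdot|x)}[r(x,y)]$, and that a linear functional on a probability simplex is maximized at a vertex. First I would note that this objective \emph{decouples across prompts}: since $\pi(\cdot|x)$ may be chosen independently for each $x$, it suffices to solve, for fixed $x$, the per-prompt problem $\sup_{\mu\in\Delta(\mathcal Y)} \E_{y\sim\mu}[r(x,y)]$. The Bradley--Terry hypothesis (\Cref{ass:bt}) enters only to guarantee that such an $r=r^*$ exists; because \eqref{eq:bt} depends on $r^*$ through differences, $r^*(\cdot|x)$ is pinned down merely up to an additive constant in $x$, but this constant does not affect any $\arg\max$, so the conclusion is unambiguous. (If one prefers to phrase the objective directly in preference terms --- maximize the win rate $\E_{y\sim\pi,\,y'\sim\piref}[\sigma(r^*(x,y)-r^*(x,y'))]$ against a fixed reference --- the same analysis applies, using monotonicity of $\sigma$ to reduce the inner problem to maximizing $r^*(x,\cdot)$.)

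Second, I would bound the per-prompt objective: for any $\mu$, $\E_{y\sim\mu}[r(x,y)] \le \max_{y}r(x,y)$, since the left side is an average (convex combination) of the values $\{r(x,y)\}_y$. Equality forces $\mu$ to be supported on $\mathcal Y^*(x):=\arg\max_y r(x,y)$: any mass placed on a $y$ with $r(x,y)<\max_{y'}r(x,y')$ strictly lowers the expectation. Under the (implicit) assumption that the maximizer is unique, $\mathcal Y^*(x)=\{y^*(x)\}$ and the unique optimal $\mu$ is the Dirac mass $\delta_{y^*(x)}$; assembling these per-prompt optima yields exactly the claimed $\pistar(y|x)=\mathbf 1[y=y^*(x)]$. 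The diversity statement is then immediate: $\pistar(\cdot|x)$ has entropy $0$, and \emph{every} response with sub-maximal reward --- including those favored by a minority of annotators, no matter how narrow the reward gap --- receives probability zero; contrast this with the regularized optimum \eqref{eq:optimal_policy}, whose full support is precisely what the $\KL$ term buys.

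The only genuine subtleties, and the step I expect to require the most care, concern \emph{measurability and attainment} rather than the algebra. For $x\mapsto\delta_{y^*(x)}$ to be an admissible policy one needs a measurable selection $x\mapsto y^*(x)$ of the arg-max, which holds under mild regularity (e.g.\ $\mathcal Y$ countable, as in the autoregressive setting, or $r(x,\cdot)$ upper semicontinuous on a compact response space). When $\mathcal Y$ is the countably infinite set of token sequences, $\max_y r(x,y)$ may fail to be attained; then I would state the conclusion in limiting form: no maximizer exists, but any reward-maximizing sequence of policies $\pi_n$ must satisfy $\pi_n\!\left(\{y: r(x,y)\ge \sup_{y'}r(x,y') - \varepsilon\}\mid x\right)\to 1$ for every $\varepsilon>0$, so the policies still collapse onto the top of the reward spectrum and their entropy is driven to zero --- the degeneracy is identical, merely realized in the limit. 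Either way, the mechanism is that an unregularized linear reward objective has no interior optimum, which is exactly the structural fact \Cref{thm:collapse} records.
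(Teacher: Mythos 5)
The paper states \Cref{thm:collapse} without any proof, simply attributing it to Xiao et al.~\cite{xiao2024algorithmic}, so there is no in-paper argument to compare against. Your derivation is correct and is the standard one: the unregularized objective is linear in $\pi(\cdot|x)$, decouples across prompts, and a linear functional on the simplex is maximized only on the arg-max set, yielding the Dirac policy (under uniqueness of the maximizer, which the theorem statement implicitly assumes). Your observations that the Bradley--Terry structure only fixes $r^*$ up to a per-prompt additive shift (irrelevant to the arg-max), that the win-rate formulation reduces to the same problem by monotonicity of $\sigma$, and that attainment/measurability are the only real subtleties on an infinite response space, are all sound and, if anything, more careful than what the survey records.
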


Xiao et al.~\cite{xiao2024algorithmic} term this \textbf{preference collapse} and show it disproportionately affects underrepresented groups in preference data.

\subsection{Reference Model Dependence}

DPO-family methods depend critically on the reference model $\piref$:

\begin{proposition}[Reference Sensitivity~\cite{lin2024limited}]
The implicit reward in DPO is:
\begin{equation}
    r_\theta(x, y) = \beta \log \frac{\pitheta(y|x)}{\piref(y|x)}
\end{equation}
This reward is undefined for $y$ where $\piref(y|x) = 0$, and poorly calibrated where $\piref(y|x)$ is small.
\end{proposition}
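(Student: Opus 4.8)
The plan is to handle the two assertions separately: the first is an identifiability statement about the DPO reparameterization and the second a quantitative conditioning bound, and neither needs anything beyond the optimal-policy form. For ``$r_\theta$ undefined where $\piref(y|x)=0$'', I would trace the implicit reward of \Cref{prop:implicit_kl} back to \Cref{thm:dpo_reparam}: the identity $r(x,y)=\beta\log(\pistar(y|x)/\piref(y|x))+\beta\log Z(x)$ is obtained by inverting the optimal-policy map $\pistar(y|x)=Z(x)^{-1}\piref(y|x)\exp(r(x,y)/\beta)$ of \Cref{eq:optimal_policy}, i.e.\ by taking logs and rearranging, and this inversion tacitly uses $\piref(y|x)>0$. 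If instead $\piref(y|x)=0$, the forward map gives $\pistar(y|x)=0$ for \emph{every} finite $r(x,y)$, so the whole fiber $\{r(x,y)\in\R\}$ collapses to the single outcome $\pistar(y|x)=0$; the map is not injective there and admits no inverse, so the reward on the complement of $\mathrm{supp}(\piref(\cdot|x))$ is not identifiable from any policy. Read literally, $\beta\log(\pitheta(y|x)/\piref(y|x))$ then has a zero denominator — equal to $+\infty$ when $\pitheta(y|x)>0$ and an indeterminate $0/0$ otherwise — hence is not a well-defined real number off the reference support. This step is essentially immediate once the reparameterization is recalled.

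For ``poorly calibrated where $\piref(y|x)$ is small'', I would make the claim precise as ill-conditioning of the map $\piref\mapsto r_\theta$ at fixed $\pitheta$, which is the relevant map because $\piref$ is itself only an estimate (an SFT checkpoint fit on finite data, sensitive to sampling temperature and tokenization). Fixing $x,y$ and perturbing to $\hat\piref(y|x)=\piref(y|x)+\eta$ with $|\eta|\le\varepsilon$, one computes $\hat r_\theta(x,y)-r_\theta(x,y)=\beta\big(\log\piref(y|x)-\log(\piref(y|x)+\eta)\big)$, and a first-order expansion valid for $|\eta|\ll\piref(y|x)$ gives $|\hat r_\theta(x,y)-r_\theta(x,y)|=\beta|\eta|/\piref(y|x)+O(\varepsilon^2/\piref(y|x)^2)$, so the sensitivity $\partial r_\theta/\partial\piref(y|x)=-\beta/\piref(y|x)$ diverges as $\piref(y|x)\to 0$. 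Thus in the tail of the reference distribution a perturbation far below the resolution of any finite SFT corpus can move $r_\theta(x,y)$ by an arbitrarily large amount, so the differences $r_\theta(x,y_1)-r_\theta(x,y_2)$ — and hence the Bradley-Terry probabilities $\sigma(r_\theta(x,y_1)-r_\theta(x,y_2))$ they implicitly define — are governed by reference-model noise rather than by true preference strength. I would round this out with a structural observation needing no perturbation argument: since $r_\theta(x,y)=\beta\log\pitheta(y|x)-\beta\log\piref(y|x)$ and $\pitheta(y|x)\le 1$, one always has the ceiling $r_\theta(x,y)\le\beta\log(1/\piref(y|x))$, so for rare $y$ the implicit reward is pinned near the large constant $\beta\log(1/\piref(y|x))$ essentially regardless of the quality of $y$, while for typical $y$ it is compressed toward $0$ — the overall scale of $r_\theta$ tracks reference rarity, not the true reward $r^*$.

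The only genuinely delicate point, and where I expect to spend the most care, is choosing a formalization of ``poorly calibrated'' that is faithful to how the implicit reward is actually used — as Bradley-Terry log-odds, including on pairs unseen in training — yet provable from the reparameterization alone, without importing coverage hypotheses that properly belong to Pillar~III; the condition-number route above achieves this, since it is purely a statement about the fixed-$\pitheta$ map $\piref\mapsto r_\theta$ and invokes nothing about the data distribution. A minor secondary issue is stating the expansion's regime of validity ($|\eta|\ll\piref(y|x)$), which is why the conclusion should be phrased as ``an unboundedly large response to a fixed perturbation once $\piref(y|x)$ is small enough'' rather than as a single uniform estimate.
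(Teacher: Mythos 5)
Your proposal is correct in its essentials and is considerably more detailed than anything in the paper: the survey states this proposition without proof, treating both claims as immediate readings of the DPO reparameterization (\Cref{thm:dpo_reparam}). Your first half --- that the forward map $\pistar(y|x)=Z(x)^{-1}\piref(y|x)\exp(r(x,y)/\beta)$ collapses the entire fiber of rewards to $\pistar(y|x)=0$ when $\piref(y|x)=0$, so the inversion defining $r_\theta$ has no meaning off the reference support and the log-ratio has a literal zero denominator --- is exactly the reasoning the paper leaves implicit, just made explicit. Your second half is a genuine addition rather than a reconstruction: the paper never says what ``poorly calibrated'' means, whereas you formalize it as ill-conditioning of the fixed-$\pitheta$ map $\piref\mapsto r_\theta$, with sensitivity $-\beta/\piref(y|x)$ diverging as $\piref(y|x)\to 0$, so that reference-model estimation noise in the tail dominates the implied Bradley--Terry log-odds. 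That is a faithful, provable formalization, and you are right that it avoids importing coverage hypotheses that belong to Pillar~III.

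One side remark is wrong and should be cut or restated. From $\pitheta(y|x)\le 1$ you obtain only the one-sided bound $r_\theta(x,y)\le\beta\log(1/\piref(y|x))$, and an upper bound does not ``pin'' the implicit reward near that ceiling: for rare $y$ with $\pitheta(y|x)\ll\piref(y|x)$ the implicit reward is arbitrarily negative. The correct structural statement is that the attainable range of $r_\theta(x,y)$ is $(-\infty,\,\beta\log(1/\piref(y|x))]$, i.e.\ the \emph{maximum} achievable implicit reward, not the reward itself, is set by reference rarity. That weaker statement still supports your calibration point --- the scale of achievable implicit rewards tracks $\piref$ rather than any true reward --- but as written the ``pinned near the large constant'' inference is a non sequitur.
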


This motivates reference-free methods (SimPO, ORPO) and multi-reference approaches (MRPO~\cite{le2024multi}) that average over multiple reference models.

\subsection{Alternative Regularizers}

The f-divergence framework generalizes KL regularization:

\begin{definition}[f-Divergence Regularization~\cite{wang2023beyond}]
For convex $f$ with $f(1) = 0$:
\begin{equation}
    D_f(\pi \| \piref) = \E_{y \sim \piref}\left[ f\left( \frac{\pi(y|x)}{\piref(y|x)} \right) \right]
\end{equation}
\end{definition}

Wang et al.~\cite{wang2023beyond} show that different f-divergences (reverse KL, Jensen-Shannon, $\alpha$-divergence) yield different trade-offs between mode-seeking and mode-covering behavior.

\section{Pillar III: Online vs.\ Offline Learning}
\label{sec:pillar3}

Perhaps the most consequential design choice is whether to use online (on-policy) or offline (off-policy) data.

\subsection{Coverage Conditions}

The key theoretical concept is \textbf{coverage}: how well the training data distribution covers the space of responses the policy might generate.

\begin{definition}[Global Coverage]
\label{def:global_coverage}
A preference dataset $\mathcal{D}$ satisfies $C$-global coverage if for all policies $\pi$ and prompts $x$:
\begin{equation}
    \E_{y \sim \pi(\cdot|x)}\left[ \frac{\pi(y|x)}{\mu(y|x)} \right] \leq C
\end{equation}
where $\mu$ is the data collection distribution.
\end{definition}

\begin{definition}[Partial Coverage]
\label{def:partial_coverage}
A dataset satisfies partial coverage if the coverage condition holds only for the \textit{optimal} policy $\pistar$, not all policies.
\end{definition}

\subsection{The Coverage Separation}

Song et al.~\cite{song2024importance} establish a fundamental separation:

\begin{theorem}[Coverage Separation~\cite{song2024importance}]
\label{thm:coverage}
\begin{enumerate}[leftmargin=*,itemsep=1pt]
    \item Offline contrastive methods (DPO, IPO) require \textbf{global coverage} for convergence to the optimal policy.
    \item Online RL methods (PPO) require only \textbf{partial coverage}.
\end{enumerate}
\end{theorem}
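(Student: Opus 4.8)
The plan is to prove the two halves by opposite techniques: an information-theoretic impossibility construction for the offline claim, and a convergence analysis of KL-regularized policy iteration for the online claim. Throughout, ``convergence to the optimal policy'' means driving the objective of \Cref{eq:rlhf} --- equivalently, the distance to $\pistar$ of \Cref{eq:optimal_policy} --- to zero.

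\textbf{Part 1: offline methods need global coverage.} I would exhibit a pair of preference instances that are statistically indistinguishable under the offline data distribution but have different optimal policies, so that no offline contrastive method can be near-optimal on both. Concretely, fix a prompt $x$ and let the data-collection distribution $\mu(\cdot\mid x)$ be supported on a strict subset $S\subsetneq\mathcal{Y}$; pick $y^\dagger\notin S$ and build two reward functions $r_1,r_2$ that agree on $S$ but differ at $y^\dagger$, with $r_2(x,y^\dagger)$ large enough that the Gibbs-optimal policy of \Cref{eq:optimal_policy} for $r_2$ must place $\Omega(1)$ mass on $y^\dagger$ while the one for $r_1$ does not. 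Since preferences are generated by the Bradley-Terry model (\Cref{def:bt}) and every pair compared under $\mu$ lies inside $S$, the induced distribution over preference datasets is \emph{identical} for $r_1$ and $r_2$. The $\Psi$PO objective (\Cref{eq:psipo}) is a functional of that data distribution alone, and it constrains $\log(\pi(y\mid x)/\piref(y\mid x))$ only for $y\in S$; hence its population minimizer is unpinned at $y^\dagger$ and any learned $\hat\pi$ is the same for both instances. Consequently $\hat\pi$ incurs $\Omega(1)$ suboptimality in \Cref{eq:rlhf} for at least one of $r_1,r_2$. This is exactly the failure under partial coverage: $S$ may satisfy $C$-coverage (\Cref{def:global_coverage}) for $\pistar$ in the $r_1$ instance while global coverage fails because $y^\dagger$ carries no data mass, and no finite sample size repairs it.

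\textbf{Part 2: online methods need only partial coverage.} I would model PPO as mirror-descent / natural-policy-gradient style regularized policy iteration and show that only $\pistar$ must be covered. The lever is the performance-difference identity for $J(\pi)=\E_{x,y\sim\pi}[r(x,y)]-\beta\KL(\pi\,\|\,\piref)$: the gap $J(\pistar)-J(\pi_t)$ is an expectation \emph{under $\pistar$} of the advantage of $\pistar$ over $\pi_t$, so the distribution-shift factor that enters the analysis is $\pistar/\mu$ rather than $\sup_\pi \pi/\mu$. Because PPO collects on-policy rollouts from the iterates $\pi_t$, and the iterates stay inside the support of $\piref$ (which contains the support of $\pistar$ by the Gibbs form of \Cref{eq:optimal_policy}), the residual estimation error is governed by the partial-coverage coefficient $C_{\pistar}=\E_{y\sim\pistar}[\pistar(y\mid x)/\mu(y\mid x)]$. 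A telescoping argument over $T$ iterations, using the $\beta$-strong convexity contributed by the KL term, then yields a bound of the form $J(\pistar)-J(\bar\pi)=O(C_{\pistar}/T)$, which vanishes with $T$ under partial coverage alone.

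\textbf{Main obstacle.} The delicate step is the online half: one must argue rigorously that on-policy data from the moving iterates $\pi_t$ --- not a fixed $\mu$ --- suffices, which needs either an explicit exploration guarantee (that $\pi_t$ retains enough mass wherever $\pistar$ does, exploiting the $\piref$-anchoring) or an optimism mechanism, and one must separate PPO's optimization error (clipping, instability) from the statistical coverage term so the separation is attributed to coverage rather than to algorithmic artifacts. The offline half is comparatively routine once the two-instance construction is fixed; its only subtlety is checking that $\mathcal{L}_{\Psi}$ cannot distinguish the instances for \emph{every} admissible convex $\Psi$, which holds because $\Psi$ acts only on in-support log-ratio differences that the two instances share.
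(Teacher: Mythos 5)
Your proposal is consistent with the paper's argument but goes considerably further: the paper's entire proof is a two-sentence sketch (``uncovered responses have zero gradient signal; online methods generate their own data and can explore''), whereas you supply the machinery that would actually make those claims theorems. Your Part~1 formalizes the ``zero gradient signal'' intuition as a Le~Cam-style two-point lower bound --- two reward functions agreeing on the support $S$ of $\mu$ but disagreeing at an uncovered $y^\dagger$ with $\piref(y^\dagger|x)>0$, inducing identical preference-data distributions and hence identical algorithm outputs, so one instance must suffer $\Omega(1)$ suboptimality. Your correct observation that the $\Psi$PO loss (\Cref{eq:psipo}) constrains log-ratios only on $S$, for \emph{every} convex $\Psi$, is what upgrades the separation from a statement about DPO to one about the whole offline contrastive family, which the paper asserts but does not argue. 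Your Part~2 replaces the paper's one-line appeal to ``exploration'' with a performance-difference-lemma analysis in which the distribution-shift term is evaluated under $\pistar$ alone, which is exactly why only the partial-coverage coefficient $C_{\pistar}$ appears. What the paper's sketch buys is brevity and an appeal to the cited source~\cite{song2024importance} for rigor; what your route buys is a self-contained proof, at the cost of the genuine technical debt you already identify --- establishing that the on-policy iterates $\pi_t$ retain mass wherever $\pistar$ does (the $\piref$-anchoring makes this plausible but it still needs an explicit lemma), and disentangling PPO's optimization error from the statistical coverage term. Neither gap invalidates the plan; both are exactly where the real work in~\cite{song2024importance} lives.
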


\begin{proof}[Proof Sketch]
Offline methods optimize over a fixed dataset and cannot explore. If the dataset lacks coverage of good responses, these responses have zero gradient signal. Online methods generate their own data, enabling exploration of high-reward regions even if initially uncovered.
\end{proof}

This theorem explains empirical findings that PPO outperforms DPO when preference data is not diverse~\cite{xu2024dpo,ivison2024unpacking}.

\subsection{Hybrid Approaches}

The coverage separation motivates hybrid methods combining offline initialization with online refinement:

\paragraph{HyPO.} Hybrid Preference Optimization~\cite{song2024importance} uses offline data for the contrastive loss and online data for KL regularization:
\begin{equation}
    \mathcal{L}_{\mathrm{HyPO}} = \mathcal{L}_{\mathrm{DPO}}^{\mathrm{offline}} + \lambda \cdot \KL(\pi \| \piref)^{\mathrm{online}}
\end{equation}

\paragraph{Iterative DPO.} Xiong et al.~\cite{xiong2023iterative} alternate between generating new preference data with the current policy and running DPO, converting offline DPO into an online algorithm.

\begin{theorem}[Hybrid Sample Complexity~\cite{bose2024hybrid}]
\label{thm:hybrid}
Hybrid methods achieve sample complexity:
\begin{equation}
    O\left( \frac{1}{\epsilon^2} \cdot \min(C_{\mathrm{global}}, C_{\mathrm{partial}} + n_{\mathrm{online}}) \right)
\end{equation}
interpolating between offline and online rates.
\end{theorem}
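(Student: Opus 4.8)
The plan is to show that a hybrid estimator can be made to match the \emph{better} of two analyses simultaneously: a purely offline analysis that pays the global concentrability $C_{\mathrm{global}}$ (via \Cref{thm:coverage}(1)), and a ``supplemented'' analysis in which $n_{\mathrm{online}}$ on-policy rollouts repair the coverage gaps so that only partial coverage $C_{\mathrm{partial}}$ is demanded of the offline set. First I would pin down the hybrid algorithm: run pessimistic Bradley-Terry reward estimation (MLE with a lower-confidence-bound penalty) on the offline preference dataset $\mathcal{D}_{\mathrm{off}}$ of size $N$, while maintaining on-policy Monte-Carlo value estimates from the $n_{\mathrm{online}}$ fresh rollouts of the current policy; the output $\hat\pi$ is the KL-constrained maximizer of the pessimistic reward, i.e.\ the closed form of \Cref{thm:optimal} with $r$ replaced by its lower-confidence estimate $\hat r$.

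The offline branch reuses the machinery behind \Cref{thm:coverage}(1). Standard MLE concentration for the Bradley-Terry model gives $\hat r - r^*$ small in the $\mu$-weighted norm at rate $\sqrt{d/N}$, where $d$ is the effective dimension of the reward class; a change of measure from $\mu$ to $\pistar$, whose density ratio is bounded by $C_{\mathrm{global}}$ by \Cref{def:global_coverage}, converts this into $V(\pistar) - V(\hat\pi) \lesssim \sqrt{C_{\mathrm{global}}\, d / N}$. Inverting, $\epsilon$-suboptimality needs $N = \tilde O(C_{\mathrm{global}}/\epsilon^2)$, the first term in the $\min$.

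For the hybrid branch I would invoke the performance-difference lemma to write $V(\pistar) - V(\hat\pi)$ as an expectation under $\pistar$ of the estimation error of $\hat r$, then split the response space into the region $\mathcal{G}$ where $\mu$ already covers $\pistar$ well (concentrability $\le C_{\mathrm{partial}}$ there, by \Cref{def:partial_coverage}) and its complement $\mathcal{G}^c$. On $\mathcal{G}$ the offline concentration bound applies with the smaller coefficient $C_{\mathrm{partial}}$; on $\mathcal{G}^c$, because the online rollouts are drawn from a mixture approaching the optimized policy, they supply direct samples of exactly the mass $\pistar$ places on $\mathcal{G}^c$, so a Bernstein bound controls that contribution at rate $\sqrt{1/n_{\mathrm{online}}}$. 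Combining and optimizing the split gives $V(\pistar) - V(\hat\pi) \lesssim \sqrt{C_{\mathrm{partial}}/N} + \sqrt{1/n_{\mathrm{online}}}$, hence a total budget $N + n_{\mathrm{online}} = \tilde O\big((C_{\mathrm{partial}} + n_{\mathrm{online}})/\epsilon^2\big)$ once $n_{\mathrm{online}} = \Theta(1/\epsilon^2)$; taking whichever of the two runs (equivalently, whichever a-posteriori bound) is smaller yields the stated $\min$, and the ``interpolation'' claim is read off the extremes $n_{\mathrm{online}} = 0$ and $C_{\mathrm{global}} = \infty$.

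The main obstacle is the coupling between the two data sources in the hybrid branch: the uncovered region $\mathcal{G}^c$ is itself policy-dependent, so the decomposition must hold \emph{uniformly} over the policy class, forcing a covering-number / uniform-convergence argument, together with care that the offline pessimism penalty does not double-count uncertainty the online samples have already resolved. Establishing that the online rollouts concentrate on the correct residual region for every candidate policy simultaneously is where the real work lies; once that uniform decomposition is in hand, the two concentration bounds and the final $\min$ are routine.
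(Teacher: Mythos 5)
The paper offers no proof of this theorem: it is stated as an imported result from Bose et al.\ with no argument beyond the surrounding prose, so there is nothing to compare step-by-step. Judged on its own terms, your two-branch architecture (an offline analysis paying $C_{\mathrm{global}}$, a hybrid analysis paying $C_{\mathrm{partial}}$ plus an online term, then a $\min$) is the right general shape, but the hybrid branch contains a genuine conceptual error. You split the response space into a region $\mathcal{G}$ where $\mu$ covers $\pistar$ and a complement $\mathcal{G}^c$, and ask the online rollouts to ``supply direct samples of exactly the mass $\pistar$ places on $\mathcal{G}^c$.'' Two problems. First, under \Cref{def:partial_coverage} partial coverage already means $\mu$ covers $\pistar$ \emph{everywhere} (the condition holds for $\pistar$, just not for all policies), so your $\mathcal{G}^c$ is empty by assumption and the split is vacuous. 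Second, and more fundamentally, online rollouts are drawn from the learner's own iterates, not from $\pistar$; nothing forces the learner to visit regions favored by $\pistar$ but missed by $\mu$ --- that is precisely the exploration problem. The standard hybrid decomposition is by \emph{measure}, not by region: the suboptimality splits into a term evaluated under $\pistar$ (controlled by offline data, which is exactly what partial coverage buys) and a term evaluated under the learner's own policy (controlled by on-policy online data, which trivially covers itself and is what offline data cannot bound without global coverage). Your assignment of roles to the two data sources is backwards, and the ``real work'' you defer --- uniform control of a policy-dependent uncovered region --- is an artifact of that miscast decomposition rather than the actual difficulty.

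There are also two smaller gaps. The closing arithmetic does not produce the stated bound: with $N = \tilde O(C_{\mathrm{partial}}/\epsilon^2)$ offline samples and $n_{\mathrm{online}} = \Theta(1/\epsilon^2)$ online samples your own derivation gives a total budget of order $(C_{\mathrm{partial}}+1)/\epsilon^2$, whereas substituting $n_{\mathrm{online}} = \Theta(1/\epsilon^2)$ into the claimed expression yields a spurious $1/\epsilon^4$ term; you are reverse-engineering the survey's (loosely typed) formula rather than deriving it, and it would be better to state the clean bound your analysis actually gives and note that the survey's expression should be read informally. Finally, achieving the $\min$ of the two branches with a single procedure is not automatic: ``take whichever run is better'' requires a selection rule, which needs either an a priori knowledge of $C_{\mathrm{global}}$ and $C_{\mathrm{partial}}$ or held-out validation data, neither of which you account for.
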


\section{Failure Modes and Pathologies}
\label{sec:failures}

We systematically analyze failure modes, connecting them to the theoretical framework.

\subsection{Reward Overoptimization}

\begin{figure}[t]
\centering
\begin{tikzpicture}[scale=0.8]
    \draw[->] (0,0) -- (5,0) node[right] {$\sqrt{\mathrm{KL}}$};
    \draw[->] (0,0) -- (0,3.5) node[above] {Reward};
    
    \draw[pillarI, thick] (0,0.5) .. controls (2,2.5) and (3.5,3) .. (4.5,3.2);
    \node[pillarI, right] at (4.5,3.2) {\small Proxy};
    
    \draw[pillarIII, thick] (0,0.5) .. controls (1.5,2) and (2.5,2.3) .. (3,2.2) .. controls (3.5,2) and (4,1.5) .. (4.5,1);
    \node[pillarIII, right] at (4.5,1) {\small True};
    
    \draw[dashed] (2.2,0) -- (2.2,2.15);
    \node[below] at (2.2,0) {\small $\mathrm{KL}^*$};
\end{tikzpicture}
\caption{\textbf{Reward Overoptimization.} Proxy reward increases monotonically with KL from reference, while true reward peaks then declines.}
\label{fig:overopt}
\end{figure}
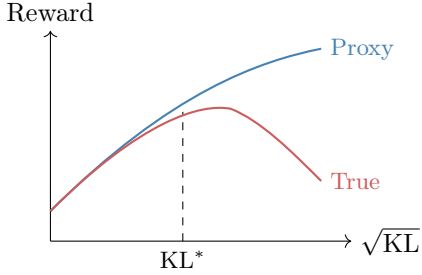

Gao et al.~\cite{gao2022scaling} establish scaling laws for reward model exploitation:

\begin{theorem}[Overoptimization Scaling~\cite{gao2022scaling}]
\label{thm:overopt}
Let $R_{\mathrm{gold}}$ denote true reward and $R_{\mathrm{proxy}}$ the learned reward model. Under RL optimization:
\begin{equation}
    R_{\mathrm{gold}} = d_0 + d_1 \sqrt{\KL} - d_2 \cdot \KL
\end{equation}
Under best-of-$n$ sampling:
\begin{equation}
    R_{\mathrm{gold}} = d_0 + d_1 \sqrt{d_3 \log n} - d_2 \cdot d_3 \log n
\end{equation}
where coefficients scale predictably with reward model size.
\end{theorem}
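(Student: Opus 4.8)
The plan is to treat the learned reward as a noisy proxy for the gold reward, writing $R_{\mathrm{proxy}} = R_{\mathrm{gold}} + \eta$ with $\eta$ a mean-zero generalization error, and to track how much genuine (gold) improvement is purchased per unit of optimization pressure. Two regimes must be handled, RL and best-of-$n$; my strategy is to derive the RL formula first by a local expansion of the KL-regularized optimum, then obtain the best-of-$n$ formula almost for free via the known relationship between $n$ and the KL divergence of the best-of-$n$ distribution.

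For the RL regime I would start from \Cref{thm:optimal}: the policy maximizing $\E_\pi[R_{\mathrm{proxy}}] - \beta\,\KL(\pi\|\piref)$ is $\pi_\beta \propto \piref\exp(R_{\mathrm{proxy}}/\beta)$, so $1/\beta$ serves as optimization pressure. Expanding in small $1/\beta$ (weak optimization) gives, to leading order, $\KL(\pi_\beta\|\piref) = \tfrac{1}{2\beta^2}\mathrm{Var}_{\piref}(R_{\mathrm{proxy}}) + o(\beta^{-2})$, hence $1/\beta \asymp \sqrt{\KL}$; and $\E_{\pi_\beta}[R_{\mathrm{gold}}] - \E_{\piref}[R_{\mathrm{gold}}] = \tfrac{1}{\beta}\mathrm{Cov}_{\piref}(R_{\mathrm{gold}}, R_{\mathrm{proxy}}) + O(\beta^{-2})$, which after substitution is exactly the $d_1\sqrt{\KL}$ term, with $d_1 \propto \mathrm{Cov}_{\piref}(R_{\mathrm{gold}},R_{\mathrm{proxy}})/\sqrt{\mathrm{Var}_{\piref}(R_{\mathrm{proxy}})}$. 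The $-d_2\,\KL$ term is the next-order correction: raising $1/\beta$ drives $\pi_\beta$ toward the maximizer of $R_{\mathrm{proxy}}$, which is displaced from the maximizer of $R_{\mathrm{gold}}$ by $\eta$. Making this quantitative requires a concrete model, e.g.\ a feature-space model $R_{\mathrm{gold}}(y)=\langle w^*,\phi(y)\rangle$ and $R_{\mathrm{proxy}}(y)=\langle w^*+e,\phi(y)\rangle$ with $e$ mean-zero, in which the $O(\beta^{-2})$ term of the expansion of $\E_{\pi_\beta}[R_{\mathrm{gold}}]$ is negative in expectation over $e$ with magnitude $\propto \beta^{-2}\asymp\KL$ and coefficient $d_2$ governed by the error covariance of $e$. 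This is the turnover drawn in \Cref{fig:overopt}.

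For best-of-$n$ I would invoke the standard estimate $\KL(\mathrm{BoN}_n\|\piref) \approx \log n - \tfrac{n-1}{n}$ (exact for continuous reward with no ties); for large $n$ this is $\log n - 1 + o(1)$, so writing $\KL = d_3\log n$ with $d_3$ an order-one factor (also absorbing the mild difference in how best-of-$n$ versus tempered-softmax policies traverse reward level sets at matched KL) and substituting $\KL\mapsto d_3\log n$ into the RL formula yields precisely the stated best-of-$n$ expression. As an independent check one can compute directly via order statistics: if $(R_{\mathrm{proxy}}, R_{\mathrm{gold}})$ are jointly light-tailed under $\piref$, the selected sample has $\E[R_{\mathrm{proxy}}]$ growing like $\sqrt{2\log n}$ (extreme-value scaling) and $\E[R_{\mathrm{gold}}\mid\text{selected}]$ inheriting a $\rho\sqrt{2\log n}$ leading term with $\rho$ the proxy–gold correlation, while the $-d_2 d_3\log n$ turnover emerges once $\eta$ is modeled as dominating $R_{\mathrm{gold}}$ in the upper tail of $R_{\mathrm{proxy}}$, so that conditioning on an extreme proxy value eventually anti-correlates with the gold reward.

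The reward-model-size dependence is handled last: model the reward model's generalization error (the variance / covariance trace of $\eta$, equivalently of $e$) as a decreasing power law in RM parameter count and/or preference-data size, and propagate it through the expressions for $d_1$ (which increases toward $\sqrt{\mathrm{Var}_{\piref}(R_{\mathrm{gold}})}$) and $d_2$ (which shrinks). I expect the genuine obstacle to be the $-d_2\,\KL$ term: unlike the $\sqrt{\KL}$ slope (a linear-response statement, essentially the Fisher-metric geometry of $\piref$) and the best-of-$n$ reduction (a known identity), the Goodhart penalty's linear-in-KL form and negative sign are not model-free and require structural assumptions on how $\eta$ behaves off the support of $\piref$. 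I would therefore present the theorem honestly as the two-term Taylor expansion of $R_{\mathrm{gold}}$ as a function of $\sqrt{\KL}$ about the reference point, with the remainder controlled under the feature/noise model, and note that deriving the RM-size exponents from first principles would require a full generalization theory for the reward model, which is outside the present scope.
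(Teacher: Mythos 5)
First, be aware that the paper itself contains no proof of \Cref{thm:overopt}: the displayed functional forms are \emph{empirical fits} reported by Gao et al.~\cite{gao2022scaling}, who fit $R_{\mathrm{gold}}$ as a function of $d=\sqrt{\KL}$ to experimental curves and explicitly decline to derive the form from first principles. So your proposal is not an alternative to the paper's argument---it is an attempt to supply a derivation where none exists, and must be judged on its own terms. On those terms, the linear-response half is sound: for the tilted family $\pi_\beta \propto \piref\exp(R_{\mathrm{proxy}}/\beta)$ the identities $\KL(\pi_\beta\|\piref)=\tfrac{1}{2\beta^2}\mathrm{Var}_{\piref}(R_{\mathrm{proxy}})+o(\beta^{-2})$ and $\E_{\pi_\beta}[R_{\mathrm{gold}}]-\E_{\piref}[R_{\mathrm{gold}}]=\tfrac{1}{\beta}\mathrm{Cov}_{\piref}(R_{\mathrm{gold}},R_{\mathrm{proxy}})+O(\beta^{-2})$ are correct and do yield a $d_1\sqrt{\KL}$ leading term with the covariance-over-standard-deviation coefficient you state; the best-of-$n$ reduction via $\KL(\mathrm{BoN}_n\|\piref)\approx\log n-\tfrac{n-1}{n}$ is likewise standard.

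The gap is in the $-d_2\,\KL$ term, and it is larger than you acknowledge. (i) The second-order coefficient of your expansion is a third joint cumulant of $(R_{\mathrm{gold}},R_{\mathrm{proxy}})$ under $\piref$; it has no sign in general, and your feature-noise model only makes it negative \emph{in expectation over the draw of the error} $e$---a statement about a typical reward model, not the one actually being optimized. (ii) More seriously, the turnover $\KL^*$ in \Cref{fig:overopt} occurs at finite, often large, KL, precisely where a two-term Taylor expansion about $\piref$ is uncontrolled. A perturbative argument can motivate the ansatz near $\KL=0$ but cannot establish that the same two terms govern the curve through and beyond its maximum, which is the entire empirical content of the scaling law. (iii) Your substitution $\KL\mapsto d_3\log n$ into the RL formula tracks the survey's restatement, but the original source fits \emph{different} functional forms in the two regimes ($d(\alpha-\beta d)$ for best-of-$n$ versus $d(\alpha-\beta\log d)$ for RL, with $d=\sqrt{\KL}$), so the premise that one formula transfers to the other by a change of variables is contradicted by the very data the theorem summarizes. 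Your closing caveat---that the result should be presented as a local expansion whose remainder is controlled only under a structural noise model, with the RM-size exponents out of reach---is the honest conclusion, but it means what you can actually prove is a strictly weaker, local statement than the theorem as written.
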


Critically, Rafailov et al.~\cite{rafailov2024scaling} show that DPO exhibits \textit{identical} overoptimization patterns despite lacking an explicit reward model, occurring even before completing one epoch of training.

\subsection{Length Hacking}

Models optimized with RLHF or DPO systematically increase response length~\cite{park2024disentangling}:

\begin{proposition}[Length Bias~\cite{park2024disentangling}]
Human annotators and reward models exhibit length bias: longer responses are preferred even controlling for quality. Under DPO:
\begin{equation}
    \nabla_\theta \mathcal{L}_{\mathrm{DPO}} \propto \nabla_\theta \log \pitheta(\yw|x) - \nabla_\theta \log \pitheta(\yl|x)
\end{equation}
If $|\yw| > |\yl|$ systematically, the gradient encourages verbosity.
\end{proposition}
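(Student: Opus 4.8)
The statement bundles a modeling claim about the preference data (``length bias'') with a gradient-dynamics claim about DPO, and I would establish them separately. For the data side, the plan is to encode length bias directly in the Bradley--Terry generative model via a spurious additive term: suppose the annotator labels $y_1 \succ y_2$ given $x$ with probability $\sigma\big(r^*(x,y_1)-r^*(x,y_2)+\alpha(|y_1|-|y_2|)\big)$ for some $\alpha>0$, where $r^*$ is the genuine quality of \Cref{ass:bt}. Conditioning on matched quality $r^*(x,y_1)=r^*(x,y_2)$ then gives $\prob(y_1\succ y_2\mid x)=\sigma(\alpha(|y_1|-|y_2|))$, which exceeds $\tfrac12$ precisely when $|y_1|>|y_2|$; hence in the collected dataset $\E[\,|\yw|-|\yl|\,]>0$ even after controlling for quality. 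This is the precise content of ``longer responses are preferred even controlling for quality,'' and it is a hypothesis on the data-generating process, not something provable from first principles.

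For the gradient identity I would differentiate the DPO objective \eqref{eq:dpo} directly. Writing $h_\theta(x,\yw,\yl)=\beta\log\frac{\pitheta(\yw|x)}{\piref(\yw|x)}-\beta\log\frac{\pitheta(\yl|x)}{\piref(\yl|x)}$ and using $\sigma'=\sigma(1-\sigma)$ together with the $\theta$-independence of $\piref$,
\begin{equation}
\nabla_\theta\mathcal{L}_{\mathrm{DPO}}=-\beta\,\E\!\left[\sigma(-h_\theta)\big(\nabla_\theta\log\pitheta(\yw|x)-\nabla_\theta\log\pitheta(\yl|x)\big)\right].
\end{equation}
The weight $\sigma(-h_\theta)\in(0,1)$ is a positive scalar measuring how badly the current policy ranks the pair, so up to this weight and the constant $\beta$ the per-sample gradient is proportional to $\nabla_\theta\log\pitheta(\yw|x)-\nabla_\theta\log\pitheta(\yl|x)$, exactly as displayed, and a gradient-descent step raises $\log\pitheta(\yw|x)$ while lowering $\log\pitheta(\yl|x)$.

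The verbosity conclusion then follows from the autoregressive decomposition $\log\pitheta(y|x)=\sum_{t=1}^{|y|}\log\pitheta(y_t|x,y_{<t})$: the term $\nabla_\theta\log\pitheta(\yw|x)$ is a sum of $|\yw|$ per-token score contributions entering with positive sign, and $\nabla_\theta\log\pitheta(\yl|x)$ a sum of $|\yl|$ contributions entering with negative sign. Each interior per-token update pushes the realized (non-EOS) token's logit up and the competing logits, in particular the EOS logit, down, so the net of one step is a sum of $|\yw|-|\yl|>0$ ``suppress-EOS'' contributions in expectation; to first order this lowers the EOS probability at interior positions, i.e.\ increases $\E_{y\sim\pitheta}[|y|]$. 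The main obstacle is precisely this last step: the per-token contributions for $\yw$ and $\yl$ share parameters and may partially cancel, so ``more summands $\Rightarrow$ net drift toward longer generations'' is not automatic. I would make it rigorous in one of two ways: (i) restrict to a tractable model class --- e.g.\ a softmax policy with sequence length as an explicit feature, so that the length component of $\nabla_\theta\log\pitheta(y|x)$ is proportional to $|y|$ along a fixed coordinate --- yielding an exact positive length drift proportional to $\E[|\yw|-|\yl|]$; or (ii) state the conclusion as a genuine first-order statement under an explicit hypothesis that interior-token score contributions do not systematically cancel across the two responses. Either route upgrades the informal ``the gradient encourages verbosity'' to a precise monotonicity claim, and the honest version of the proposition states the gradient identity exactly and the length conclusion only under one of these added assumptions.
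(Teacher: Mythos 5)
Your derivation is correct, and it is worth noting that the paper itself offers no proof of this proposition at all: it is stated as an imported empirical-plus-heuristic claim from Park et al., with the gradient display given informally and the verbosity conclusion asserted in one sentence. Your gradient computation is the standard one and is right, including the weight: $\nabla_\theta[-\log\sigma(h_\theta)] = -\sigma(-h_\theta)\,\nabla_\theta h_\theta$, so the per-sample gradient is $-\beta\,\sigma(-h_\theta)\bigl(\nabla_\theta\log\pitheta(\yw|x)-\nabla_\theta\log\pitheta(\yl|x)\bigr)$, which makes the paper's ``$\propto$'' precise (with a negative constant of proportionality, and a data-dependent positive scalar weight the paper suppresses). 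Your formalization of ``length bias controlling for quality'' as an additive $\alpha(|y_1|-|y_2|)$ term in the Bradley--Terry logit is a reasonable way to give the first sentence mathematical content, and you are right that it is an assumption about the data-generating process rather than a theorem. Most importantly, you correctly isolate the real gap: the inference from ``$\yw$ is systematically longer'' plus ``the update raises $\log\pitheta(\yw|x)$ and lowers $\log\pitheta(\yl|x)$'' to ``expected generation length increases'' does not follow without something like your non-cancellation hypothesis or an explicit length feature in the policy parameterization, because the per-token score contributions of $\yw$ and $\yl$ share parameters and can interfere. The paper glosses over exactly this step, so your proposal is not merely consistent with the paper's (absent) argument --- it is strictly more careful, and either of your two proposed routes would be an acceptable way to turn the proposition into a provable statement.
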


SimPO's length normalization (\Cref{eq:simpo}) directly addresses this by using per-token log-probabilities.

\subsection{Likelihood Displacement}

DPO's gradient structure creates an asymmetry:

\begin{proposition}[Gradient Asymmetry~\cite{feng2024towards}]
\label{prop:asymmetry}
The DPO gradient satisfies:
\begin{equation}
    \left| \frac{\partial \mathcal{L}}{\partial \log \pi(\yl|x)} \right| > \left| \frac{\partial \mathcal{L}}{\partial \log \pi(\yw|x)} \right|
\end{equation}
when $\pi(\yw|x) > \pi(\yl|x)$. DPO decreases dispreferred likelihood faster than it increases preferred likelihood.
\end{proposition}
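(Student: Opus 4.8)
The plan is to differentiate the DPO loss through its two log-probability arguments and then account for the fact that, in an autoregressive model, $\log\pitheta(\yw|x)$ and $\log\pitheta(\yl|x)$ are not independent coordinates but are driven jointly through the shared parameters $\theta$. First I would recall the closed form of the DPO gradient, which follows from \Cref{thm:dpo_reparam} and \Cref{eq:dpo} by the chain rule:
\begin{equation}
  \nabla_\theta \mathcal{L}_{\mathrm{DPO}} = -\beta\,\E\!\left[\bigl(1-\sigma(h)\bigr)\bigl(\nabla_\theta\log\pitheta(\yw|x) - \nabla_\theta\log\pitheta(\yl|x)\bigr)\right],
\end{equation}
where $h = \beta\log\frac{\pitheta(\yw|x)}{\piref(\yw|x)} - \beta\log\frac{\pitheta(\yl|x)}{\piref(\yl|x)}$ is the implicit-reward margin and the per-example weight $1-\sigma(h)=\sigma(-h)$ multiplies the winning and losing contributions \emph{identically}. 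The key observation to highlight at this point is that the asymmetry therefore cannot come from the dependence of the loss on $h$ alone; it must come from the geometry of the two maps $\theta\mapsto\log\pitheta(\yw|x)$ and $\theta\mapsto\log\pitheta(\yl|x)$.

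The second step is to quantify that geometry. Writing $\log\pitheta(y|x)=\sum_t \log\pitheta(y_t\mid x,y_{<t})$ and using the softmax Jacobian identity $\nabla_z\log(\mathrm{softmax}(z))_i = e_i - \mathrm{softmax}(z)$, the per-token contribution to $\nabla_\theta\log\pitheta(y|x)$ is scaled by $1-\pitheta(y_t\mid x,y_{<t})$ in its ``self'' direction, so that $\|\nabla_\theta\log\pitheta(y|x)\|$ is monotonically \emph{decreasing} in the assigned probability of $y$ (under the standard simplification that the token-level feature gradients have comparable norms across the vocabulary). Plugging this into the first-order change produced by one gradient step $\theta\mapsto\theta-\eta\nabla_\theta\mathcal{L}_{\mathrm{DPO}}$ gives, per example,
\begin{align}
  \Delta\log\pitheta(\yw|x) &\approx \eta\beta\bigl(1-\sigma(h)\bigr)\Bigl(\|\nabla_\theta\log\pitheta(\yw|x)\|^2 - \langle \nabla_\theta\log\pitheta(\yw|x),\,\nabla_\theta\log\pitheta(\yl|x)\rangle\Bigr),\\
  \Delta\log\pitheta(\yl|x) &\approx -\eta\beta\bigl(1-\sigma(h)\bigr)\Bigl(\|\nabla_\theta\log\pitheta(\yl|x)\|^2 - \langle \nabla_\theta\log\pitheta(\yw|x),\,\nabla_\theta\log\pitheta(\yl|x)\rangle\Bigr),
\end{align}
so the downward push on $\yl$ dominates the upward push on $\yw$ in magnitude exactly when $\|\nabla_\theta\log\pitheta(\yl|x)\| > \|\nabla_\theta\log\pitheta(\yw|x)\|$, i.e. when $\pitheta(\yl|x) < \pitheta(\yw|x)$, which is the claimed regime. (A shorter route, if one is content to treat the probabilities rather than their logarithms as the differentiation variables, is to note directly that $\partial\mathcal{L}_{\mathrm{DPO}}/\partial\pitheta(y|x)=\mp\beta(1-\sigma(h))/\pitheta(y|x)$, whose magnitude is immediately larger for whichever response has the smaller probability; I would include this as the clean special case and sanity check.)

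The main obstacle is controlling the cross term $\langle \nabla_\theta\log\pitheta(\yw|x),\,\nabla_\theta\log\pitheta(\yl|x)\rangle$: when $\yw$ and $\yl$ share a long common prefix or are otherwise token-wise similar---the typical situation in preference datasets---this inner product is large and positive, and it is precisely this quantity (essentially the centered hidden-embedding similarity of the two responses) that controls whether $\Delta\log\pitheta(\yw|x)$ is even positive. A rigorous proof therefore needs either (i) an explicit last-layer model in which $\nabla_\theta\log\pitheta(y|x)$ is written through the unembedding matrix and hidden states, so that the inner product can be expanded and bounded, as in Feng et al.~\cite{feng2024towards}, or (ii) a hypothesis that $\yw$ and $\yl$ are sufficiently dissimilar that the squared-norm terms dominate. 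Accordingly I would structure the argument as two lemmas---one establishing the monotonicity $\|\nabla_\theta\log\pitheta(\cdot|x)\|$ decreasing in $\pitheta(\cdot|x)$, one bounding the cross term under an explicit feature-space or dissimilarity assumption---and then combine them with the gradient identity above to obtain the stated inequality, noting that the conclusion ``DPO decreases dispreferred likelihood faster than it increases preferred likelihood'' is the direct dynamical reading of the two displayed expressions for $\Delta\log\pitheta$.
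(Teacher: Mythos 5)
Your proposal is correct in substance, and the decisive step is actually the parenthetical ``shorter route'': the paper states this proposition without proof (it is imported from Feng et al.), and the argument in that source is precisely your one-line computation $\partial\mathcal{L}_{\mathrm{DPO}}/\partial\pi_\theta(y|x)=\mp\beta(1-\sigma(h))/\pi_\theta(y|x)$, whose magnitudes are in the ratio $\pi_\theta(\yw|x)/\pi_\theta(\yl|x)>1$ exactly under the stated hypothesis. Your opening observation is the crux and worth making explicit: if one reads the displayed inequality literally, with $\log\pi(\yw|x)$ and $\log\pi(\yl|x)$ as independent coordinates, the two partials are \emph{equal} in magnitude (both $\beta(1-\sigma(h))$) and the hypothesis $\pi(\yw|x)>\pi(\yl|x)$ plays no role; the condition only becomes meaningful when differentiating with respect to the raw probabilities, so the proposition's notation should be read that way. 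Your longer parameter-space analysis---softmax Jacobian norms, the cross term $\langle\nabla_\theta\log\pi(\yw|x),\nabla_\theta\log\pi(\yl|x)\rangle$, and the first-order $\Delta\log\pi$ after one SGD step---is a genuinely different and more ambitious argument: it addresses the \emph{dynamical} claim (``decreases dispreferred likelihood faster than it increases preferred likelihood'') rather than the static partial-derivative inequality, and it correctly isolates the two assumptions (gradient-norm monotonicity in the assigned probability, and control of the cross term for token-wise similar pairs) that any rigorous version would need. That extra machinery is not required to establish the proposition as stated, and its two supporting lemmas remain heuristic as written, but flagging the cross term as the obstacle is exactly the right diagnosis---it is the mechanism behind the likelihood-displacement phenomenon this proposition is invoked to explain.
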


This ``3D property'' (Drastic drop, Degradation, Dispersion)~\cite{yan20243d} explains why DPO can harm capabilities on responses similar to dispreferred examples.

\subsection{Failure Mode Summary}

\Cref{tab:failures} connects failure modes to their theoretical causes.

\begin{table}[t]
\centering
\caption{\textbf{Failure Modes and Their Causes.} Each pathology traces to specific design choices in preference learning.}
\label{tab:failures}
\small
\begin{tabular}{@{}lll@{}}
\toprule
\textbf{Failure Mode} & \textbf{Cause} & \textbf{Mitigation} \\
\midrule
Overoptimization & Proxy reward error & Early stopping, KL \\
Length hacking & Reward model bias & Length normalization \\
Mode collapse & Weak regularization & Stronger KL penalty \\
Preference collapse & No regularization & PM-RLHF~\cite{xiao2024algorithmic} \\
Likelihood displacement & DPO gradient asymmetry & IPO, regularization \\
Distribution shift & Offline data limits & Online/hybrid methods \\
\bottomrule
\end{tabular}
\end{table}

\section{Empirical Landscape}
\label{sec:empirical}

\Cref{tab:results} summarizes empirical comparisons across major benchmarks.

\begin{table*}[t]
\centering
\caption{\textbf{Empirical Results Across Benchmarks.} Performance of preference learning methods on instruction following (AlpacaEval 2, MT-Bench), reasoning (GSM8K), and safety (HH-RLHF). LC = length-controlled.}
\label{tab:results}
\small
\begin{tabular}{@{}llcccccc@{}}
\toprule
\textbf{Method} & \textbf{Base Model} & \textbf{AlpacaEval 2 (LC)} & \textbf{MT-Bench} & \textbf{Arena-Hard} & \textbf{GSM8K} & \textbf{Training Cost} \\
\midrule
SFT (baseline) & Llama-3-8B & 15.2\% & 7.1 & 18.3\% & 72.1\% & 1$\times$ \\
\midrule
PPO~\cite{ouyang2022training} & Llama-3-8B & 28.4\% & 7.8 & 31.2\% & 74.3\% & 4-8$\times$ \\
DPO~\cite{rafailov2023direct} & Llama-3-8B & 25.1\% & 7.6 & 27.8\% & 73.2\% & 1.5$\times$ \\
IPO~\cite{azar2023general} & Llama-3-8B & 24.8\% & 7.5 & 26.9\% & 73.0\% & 1.5$\times$ \\
KTO~\cite{ethayarajh2024kto} & Llama-3-8B & 23.6\% & 7.4 & 25.4\% & 72.8\% & 1.3$\times$ \\
SimPO~\cite{meng2024simpo} & Llama-3-8B & 31.5\% & 7.9 & 34.1\% & 73.9\% & 1.2$\times$ \\
ORPO~\cite{hong2024orpo} & Llama-3-8B & 26.3\% & 7.6 & 28.5\% & 73.1\% & 1.0$\times$ \\
\midrule
SimPO~\cite{meng2024simpo} & Gemma-2-9B-it & 72.4\% & 8.5 & 59.1\% & --- & 1.2$\times$ \\
WPO~\cite{zhou2024wpo} & Gemma-2-9B-it & 76.7\% & 8.6 & 62.3\% & --- & 1.5$\times$ \\
\bottomrule
\end{tabular}
\end{table*}

\subsection{Key Empirical Findings}

\paragraph{PPO vs.\ DPO.} The comparison is nuanced. Xu et al.~\cite{xu2024dpo} find PPO superior with careful tuning, while Ivison et al.~\cite{ivison2024unpacking} show DPO can match PPO with high-quality preference data. The coverage theorem (\Cref{thm:coverage}) predicts this: DPO wins when data is diverse, PPO when exploration matters.

\paragraph{SimPO's Success.} SimPO consistently outperforms DPO despite simpler design~\cite{meng2024simpo}. Our framework explains this: (1) length normalization addresses verbosity bias; (2) the target margin $\gamma$ provides explicit regularization missing in DPO; (3) reference-free design avoids reference model miscalibration.

\paragraph{Data Quality Dominates.} Ivison et al.~\cite{ivison2024unpacking} find that preference data quality matters more than algorithm choice---an 8\% improvement from better data vs.\ 2.5\% from PPO over DPO. This suggests the field may be optimizing the wrong variable.

\section{Practitioner's Guide}
\label{sec:guide}

\Cref{tab:practitioner} provides actionable recommendations.

\begin{table}[t]
\centering
\caption{\textbf{Practitioner's Decision Guide.} \checkmark = suitable; \checkmark\checkmark = strongly recommended; --- = not suitable.}
\label{tab:practitioner}
\small
\begin{tabular}{@{}lccccc@{}}
\toprule
\textbf{Scenario} & \textbf{PPO} & \textbf{DPO} & \textbf{SimPO} & \textbf{IPO} & \textbf{ORPO} \\
\midrule
Limited compute & --- & \checkmark & \checkmark\checkmark & \checkmark & \checkmark\checkmark \\
High-quality diverse data & \checkmark & \checkmark\checkmark & \checkmark\checkmark & \checkmark & \checkmark \\
Limited/biased data & \checkmark\checkmark & --- & \checkmark & \checkmark & --- \\
Verbosity concerns & \checkmark & --- & \checkmark\checkmark & \checkmark & \checkmark \\
Need stability & --- & \checkmark & \checkmark\checkmark & \checkmark\checkmark & \checkmark \\
No reference model & --- & --- & \checkmark\checkmark & --- & \checkmark\checkmark \\
Binary feedback only & --- & --- & --- & --- & --- \\
Single-stage training & --- & --- & --- & --- & \checkmark\checkmark \\
\bottomrule
\end{tabular}
\end{table}

\paragraph{Default Recommendation.} For most practitioners with moderate compute and reasonable preference data: \textbf{SimPO} offers the best trade-off of simplicity, stability, and performance.

\paragraph{When to Use PPO.} Choose PPO when: (1) preference data coverage is limited; (2) you need to explore capabilities beyond the data distribution; (3) you have compute budget for 4-8$\times$ training cost.

\paragraph{When to Use DPO.} Choose DPO when: (1) you have high-quality, diverse preference data; (2) you need a simple baseline; (3) reference model is well-calibrated to your domain.

\section{Open Problems and Future Directions}
\label{sec:open}

\subsection{Scaling Laws for Preference Learning}

While Gao et al.~\cite{gao2022scaling} establish overoptimization scaling, fundamental questions remain: How does preference learning scale with model size? With data size? Is there a ``Chinchilla'' for alignment?

\subsection{Beyond Pairwise Comparisons}

Current methods are limited to pairwise (or binary) feedback. Listwise methods using Plackett-Luce models~\cite{yuan2023rrhf} and partial rankings remain underexplored. As \Cref{prop:identification} shows, richer feedback may be necessary for identifying heterogeneous preferences.

\subsection{Multi-Objective Alignment}

Real alignment involves multiple objectives (helpfulness, harmlessness, honesty) that may conflict. Sequential Preference Optimization~\cite{lou2024spo} and multi-objective methods remain nascent. How to aggregate conflicting preferences while respecting Pareto efficiency is an open theoretical question.

\subsection{Distribution Shift and Continual Learning}

Preference distributions shift over time. How can aligned models adapt without catastrophic forgetting? The interaction between preference learning and continual learning is largely unexplored.

\subsection{Theoretical Foundations of In-Context Alignment}

Recent work shows LLMs can be aligned through prompting alone~\cite{lin2023unlocking}. Understanding when and why in-context alignment works could reveal fundamental principles about preference learning in neural networks.

\section{Conclusion}
\label{sec:conclusion}

Preference learning has evolved from a single paradigm (RLHF with PPO) to a diverse ecosystem of methods. This proliferation reflects genuine algorithmic innovation but has left practitioners without clear guidance. Our unified framework---organizing methods along the axes of preference model, regularization mechanism, and data distribution---transforms the landscape from an empirical art into a theoretically grounded discipline.

The key insights are actionable: DPO requires diverse data due to coverage requirements; SimPO's success stems from addressing length bias and reference miscalibration; failure modes like overoptimization arise predictably from specific design choices. The coverage separation theorem explains when online methods are necessary; the preference collapse theorem explains why regularization is essential.

As language models grow more capable, aligning them with human values becomes both more important and more challenging. The theoretical foundations established here---connecting classical choice theory to modern deep learning---provide the principled basis for this ongoing effort.

\section*{Acknowledgments}
We thank the broader research community whose work made this survey possible. We are grateful for open-source implementations that enable reproducible research in preference learning.

\bibliographystyle{plain}

\end{document}